\let\Contentsline\contentsline 
\renewcommand\contentsline[3]{\Contentsline{#1}{#2}{}}
\renewcommand{\@dotsep}{10000} 
\def\LinPhi{\phi}
\newcommand{\LinPhiStart}[1]{\phi_0}
\def\LinPar{w}
\newcommand{\LinParEmpNew}[1]{\widehat  w^{+}}
\newcommand{\CovEmpReg}[1]{\widehat \Sigma_{\lambda}}
\newcommand{\CovBootEmp}[1]{\widehat \Sigma_{\text{Boot}}}
\newcommand{\CovBoot}[1]{\Sigma_{\text{Boot}}}
\newcommand{\CovBootReg}[1]{\Sigma^{\reg}_{\text{Boot}}}
\def\OPC{\Concentrability^\star}
\def\reg{\ensuremath{\lambda}}
\newcommand{\LinExp}[1]{{\color{teal} \xi}}
\def\radius{r}
\def\Minimax{\mathcal M}
\def\MinimaxEmp{\widehat{\Minimax}}
\def\lTerm{\log(1/\FailureProbability)}
\def\PartSetTD{\widetilde{\mathcal U}}
\def\BeCom{\ensuremath{\beta}}
\newcommand{\gbest}[1]{\g_{#1}}
\newcommand{\gbestemp}[1]{\ghat_{#1}}
\def\Mag{\nu}
\def\factor{K}
\def\GlobalIBE{\mathcal I_{\fClass}}
\def\Incompleteness{\mathcal I}
\def\PE{\mathcal E}
\def\fhatFQ{\fhat_{\text{FQ}}}
\def\Backup{\mathrm T}
\def\crfc{\frac{\ln(\card{\fClass}/\FailureProbability)}{(1-\BeCom)\nSamples}}
\def\Increment{\Delta}
\newcommand{\CriticalCondition}[1]{
	\Rademacher_\nSamples 
	\Big\{
	\Cost(\f,\f) - \Cost(\gbest{\f},\f ) 
	\mid 
	\P \Uvar(\f) \leq 2#1^2
	\Big\}
	\lesssim #1^2
	\qquad \text{and} \qquad
	(\factor + 1) \frac{\log_2(1/(\FailureProbability\criticalradius)) }{\nSamples}  \lesssim #1^2.
}
\def\Event{\ensuremath{E}}
\def\EventY{\ensuremath{E}}
\def\PartEventY{\Event}
\icmltitlerunning{When is Realizability Sufficient 
for Off-Policy Reinforcement Learning?}
\begin{document}

\twocolumn[
\icmltitle{}



\icmlsetsymbol{equal}{*}

\begin{icmlauthorlist}
\icmlauthor{Andrea Zanette}{UCB}
\end{icmlauthorlist}

\icmlaffiliation{UCB}{Department of Electrical Engineering and Computer Sciences, University of California, Berkeley, United States of America}

\icmlcorrespondingauthor{Andrea Zanette}{zanette@berkeley.edu}

\icmlkeywords{Machine Learning, ICML}

\vskip 0.3in
]



\printAffiliationsAndNotice{}  

\begin{abstract}
Understanding when reinforcement learning 
algorithms can make successful off-policy
predictions---and when the may fail to do so--remains
an open problem.
Typically, model-free algorithms 
for reinforcement learning  
are analyzed under a condition called 
Bellman completeness when they operate
off-policy with function approximation,
unless additional conditions are met.
However, Bellman completeness 
is a requirement that is much stronger 
than realizability
and that is deemed to be too strong 
to hold in practice.
In this work, we relax this structural assumption 
and analyze the statistical complexity of off-policy 
reinforcement learning when only realizability holds 
for the prescribed function class.

We establish finite-sample guarantees for off-policy reinforcement learning 
that are free of the approximation error term known as inherent Bellman error,
and that depend on the interplay of three factors.
The first two are well known: they are the metric entropy of the function class
and the concentrability coefficient that represents the cost of learning off-policy.
The third factor is new, and it measures the violation of Bellman completeness,
namely the mis-alignment between the chosen function class 
and its image through the Bellman operator.
Our analysis directly applies to the solution found by 
temporal difference algorithms when they converge.
\end{abstract}

\section{Introduction}
Markov decision processes (MDP) \cite{puterman1994markov,Bertsekas_dyn1,Bertsekas_dyn2} 
provide a general framework for reinforcement learning (RL)
\cite{bertsekas1996neuro,sutton2018reinforcement},
which is a general paradigm for prediction and decision making under uncertainty.
Modern RL algorithms typically solve sequences of sub-problems 
that require estimating the value of a policy different from the one that generated the dataset,
a task broadly called \emph{off-policy} reinforcement learning. 
Moreover, function approximations are typically implemented to deal with large state-action spaces.

Various off-policy methods have been proposed, 
such as importance sampling \cite{precup2000eligibility,thomas2016data,jiang2016doubly} 
and weight learning \cite{uehara2020minimax,jiang2020minimax,zanette2022Bellman}.
Nonetheless, methods based on controlling the temporal difference error,
such as fitted Q iteration \cite{ernst2005tree,munos2008finite},
TD \cite{sutton1988learning}, 
and their variants such as $Q$-learning \cite{watkins1992q}, 
remain widely used especially with deep 
function approximation 
\cite{tesauro1995temporal,mnih2013playing,mnih2015human,mnih2016asynchronous, fujimoto2018addressing}.
We collectively refer to these algorithms as \emph{temporal difference (TD) methods}.

\paragraph{Bellman completeness: a fundamental RL notion}
When the state-action space is large,
TD methods are implemented with a function approximation class 
for the action value function. Their existing analyses 
\cite{munos2008finite,chen2019information,duan2020minimax,fan2020theoretical}
rely on a fundamental reinforcement learning notion 
known as \emph{Bellman completeness}, 
which must hold for these algorithms to succeed.
Completeness requires the chosen approximation space 
to fully capture each Bellman backup,
see \cref{fig:BellmanComplete}.
However, such requirement is deemed too strict to hold in practice.
What is more, even related algorithms that are theoretically more robust
than TD and fitted Q, such as the minimax variant 
\cite{antos2008learning},
also rely on Bellman completeness to properly function without approximation error. 

This led researchers to investigate fundamental limits 
\cite{chen2019information,zanette2020exponential,wang2020statistical,
weisz2020exponential,wang2021exponential,foster2021offline}.
Recently, \cite{foster2021offline} discovered that completeness 
is crucial in an information-theoretic sense:
even with seemingly benign distribution shifts,
exponential lower bounds quickly arise in the absence of Bellman completeness.

Unfortunately, completeness is a very hard condition to meet.
For example, when realizability is violated, the predictor class can be expanded 
so as to reduce the approximation error, 
a balancing act known as bias variance trade-off \cite{shalev2014understanding}.
On the contrary, when Bellman completeness is not satisfied,
enlarging the prescribed function class may make completeness even more violated,
because the Bellman backup of this new and bigger function class must now be correctly represented.

In summary, while realizability is also needed 
to make good predictions in Statistics, 
Bellman completeness seems like an additional requirement 
specific to reinforcement learning,
one that is intuitively very restrictive and undesirable, 
and unlikely to hold in practice, but seemingly necessary.

\paragraph{Contribution}
In this work we analyze the statistical complexity of off-policy reinforcement learning 
in settings where only realizability is assumed,
and bridge the gap between the Bellman complete case 
and the known exponential lower bounds that arise when 
Bellman completeness is ``extremely'' violated.
In order to characterize this intermediate regime,
we introduce the concept of local inherent Bellman errors
to measure the local violation of Bellman completeness.
We then establish off-policy error bounds 
for the solution found by 
the minimax reinforcement learning formulation \cite{antos2008learning},
first with function classes of finite-cardinality and then with more general, non-parametric ones.

Our error bounds depend on three critical factors:
1) the metric entropy of the chosen function class, 
2) a certain amplifying factor, called concentrability coefficient,
that arises due to the distribution shift, and 
3) a new amplifying factor that represents the mis-alignment between the prescribed
function class and its image through the Bellman operator. 
Furthermore, these error bounds apply to the widely used 
iterative TD methods when and if they do converge.

The main improvement compared to prior analyses is that
\emph{the violation of Bellman completeness is expressed as an amplifying factor
that affects the sample complexity, instead of as an approximation error term known 
as inherent Bellman error}.
The improvement arises from the application of a localization argument 
to measure the violation of Bellman completeness.
Effectively, this \textbf{removes the assumption of Bellman completeness for off-policy evaluation}:
instead, the lack of completeness is measured by a certain coefficient
---like the metric entropy measures the function capacity and the concentrability measures the distribution shift---that can 
in principle be computed.
We expect the insights of this paper to apply more broadly to other settings such as policy optimization or exploration.

Bellman complete models require all Bellman backups
to be contained in the prescribed function class.
In contrast, in our work the two are allowed to be only partially aligned.
It follows that the decision processes that can be studied with our framework are 
far richer and more realistic than those that are Bellman complete, 
because the image of the prescribed function class
through the Bellman operator can have a complex, truly high-dimensional structure.

Most literature is discussed in \cref{sec:Literature}.

\begin{figure*}
\centering
\begin{subfigure}{0.3\textwidth}
	\tdplotsetmaincoords{45}{30}
	\begin{tikzpicture}[scale=2.0, line cap=round, 
	line join=round,tdplot_main_coords]
	\def\origin{(0,0,0)}
	\def\Tf{(1,1,0.0)}
	\def\gf{(1,1,0)}
	\def\gfm{(0.5,0.5,0)}
	\def\F{(-0.6,-0.15,0)}
	\draw[color=gray!40, fill = gray!20] \gfm ellipse (1.5cm and 0.45cm);
	\draw[] \origin node[below]{$\f$}; 
	\draw[draw=none] (1,1,0.5) node[above]{\phantom{$\T\f$}};
	\draw[dashed ] \Tf node[right]{$\T\f$} -- \gf node[below right]{};
	\draw[densely dotted] \origin -- \gf;
	\draw [fill] \origin circle (0.5pt);
	\draw [fill] \Tf circle (0.5pt);
	\draw [fill] \gf circle (0.5pt);
	\draw plot[variable=\t,domain=0:1,samples=100,smooth]
  	({\t},{-0.5*\t + 1.5*\t^2},{0});
  	\draw[] \F node[above]{\textcolor{gray}{$\fClass$}};
\end{tikzpicture}
\caption{}
\label{fig:BellmanComplete}
\end{subfigure}
\qquad\qquad\qquad
\begin{subfigure}{0.3\textwidth}
	\tdplotsetmaincoords{45}{30}
	\begin{tikzpicture}[scale=2.0, line cap=round, 
	line join=round,tdplot_main_coords]
	\def\origin{(0,0,0)}
	\def\Tf{(1,1,0.5)}
	\def\gf{(1,1,0)}
	\def\gfm{(0.5,0.5,0)}
	\def\F{(-0.6,-0.15,0)}
	\draw[color=gray!40, fill = gray!20] \gfm ellipse (1.5cm and 0.45cm);
	\draw[] \origin node[below]{$\f$}; 
	\draw[dashed ] \Tf -- \gf node[below right]{$\Projector\T\f$};
	\draw[] \Tf node[above]{$\T\f$};
	\draw[densely dotted] \origin -- \gf;
	\draw [fill] \origin circle (0.5pt);
	\draw [fill] \Tf circle (0.5pt);
	\draw [fill] \gf circle (0.5pt);
	\draw plot[variable=\t,domain=0:1,samples=100,smooth]
  	({\t},{\t},{0.8*\t - 0.3*\t^2});
  	\draw[] \F node[above]{\textcolor{gray}{$\fClass$}};
\end{tikzpicture}
\caption{}
\label{fig:BellmanIncomplete}
\end{subfigure}
\caption{Bellman completeness (\cref{fig:BellmanComplete}) 
puts strong restrictions on the Bellman operator $\T$,
because the Bellman operator $\T$ must map the chosen function class $\fClass$ 
onto itself, i.e., $\T\fClass \subseteq \fClass$.
Without Bellman completeness (\cref{fig:BellmanIncomplete}), there is no restriction on $\T\fClass$,
although its alignment with $\fClass$ 
does influence the statistical complexity of off-policy reinforcement learning.}
\end{figure*}
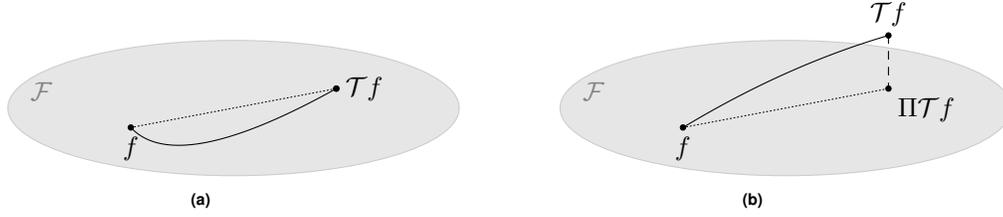

\section{Preliminaries}
\label{sec:Notation}
Here we recall the basic definitions; 
some additional background material 
can be found in \cref{app:Notation_Additional}.
\subsection{Notation and Set-up}
 
We focus on infinite-horizon discounted Markov decision
processes~\cite{puterman1994markov,bertsekas1996neuro,sutton2018reinforcement}
with discount factor $\discount \in [0,1)$, state space $\StateSpace$,
and an action set $\ActionSpace$.  For each state-action pair $\psa$,
there is a reward distribution $\RewardLaw\psa$ over $[0,1]$
with mean $\reward\psa$, and a transition function
$\TransitionLaw(\cdot \mid \state, \action)$.

A (stationary) target policy $\policy$ maps states to actions.  
Its action value function is denoted with $\fstar$.
It is defined as the discounted sum of future
rewards based on starting from the pair $\psa$, and then following the
policy $\policy$ in all future time steps
$\fstar\psa = \reward\psa + \sum_{\hstep = 1}^{\infty}
\discount^{\hstep} \E [ \reward_{\hstep}({\SState}_\hstep,
  {\AAction}_\hstep) \mid (\SState_0, \AAction_0) = \psa],
$
where the expectation is taken over trajectories with
$
\AAction_{\hstep} \sim \policy(\cdot \mid \SState_\hstep), \quad
\mbox{and} \quad \SState_{\hstep+1} \sim \TransitionLaw(\cdot \mid
\SState_\hstep, \AAction_\hstep) \quad \mbox{for $\hstep = 1, 2,
  \ldots$.}
$
We also use $\f(\state,\policy) = \E_{\AAction\sim \policy(\cdot \mid \state)} \f(\state, \AAction)$ 
 and define the
\emph{Bellman evaluation operator} and its empirical counterpart using the observed reward $\reward$ 
and successor state $\successorstate$ as
\begin{align*}
(\T\f)\psa & = \reward\psa + \discount \E_{\SState^+ \sim
    \TransitionLaw(\cdot \mid \state, \action)} \f(\SState^+ ,\policy), \\
	(\Backup \f) \pobs & = \reward + \discount \f(\successorstate,\policy). 
\end{align*}
The key property needed in our theorems is that $\Backup$ is a bounded operator.
The discounted occupancy measure of the target policy $\policy$ is given by
$
\dpi{\policy}\psa = (1 - \discount )
\sum_{\hstep=0}^\infty \discount^\hstep \Pro_\hstep[
  (\SState_\hstep,\AAction_\hstep) = \psa ],
$
where $\Pro_\hstep$ is the probability of encountering a certain state-action pair
when following $\policy$ from a given initial state.

We are interested in the \emph{prediction error} from a certain initial state $\state_0$,
which will be omitted later for brevity
$$
\PE(\f) = (\fstar - \f)(\state_0,\policy).
$$

Throughout the paper we assume that the learner has access to an action value function class $\fClass$ 
that contains the correct predictor.
\begin{assumption}[Realizability]
\label{asm:Realizability}
	$\fstar \in \fClass$.
\end{assumption}

\paragraph{Learning from a dataset}
We assume we have access to a dataset 
$\Dataset = \{ \sars{\iSample} \}_{\iSample= 1,\dots,\nSamples}$ 
that contains $\nSamples$ tuples. Each tuple contains a state $\state$,
an action $\action$, a reward $\reward$ and a successor state $\successorstate$.
In order to deal with the situation where the dataset is created using different policies,
we assume that the states and actions are sampled from an underlying distribution $\Dist$.
Conditioned on $\psa \sim \Dist$, the reward and successor state
in a certain tuple are sampled from the Markov reward process,
i.e., $\reward \sim \RewardLaw\psa$ and $\successorstate \sim \Transition{}\psa$.
The associated expectation operator over $\sars{}$ is often denoted with $\P$,
while its empirical counterpart over $\sars{} \in \Dataset$ is denoted with $\Pn$.

We commonly measure quantities using the norm induced by the distribution $\Dist$ and the policy $\policy$.
Let $\f$ be a function defined over the state-action space;
they are defined as
\begin{align*}
	\norm{\f}{\Dist}^2 = \E_{\psa \sim \Dist} [\f\psa]^2, \quad \norm{\f}{\policy}^2 = \E_{\psa \sim \dpi{\policy}} [\f\psa]^2.
\end{align*}

\paragraph{Projections}
The projection operator $\Projector$ onto $\fClass$  
takes in a function $\h$ and finds a function $\g \in \fClass$ closest to $\h$
\begin{align*}
	\Projector\h = \argmin_{\g \in \fClass} \norm{\g - \h}{\Dist}.
\end{align*}
In most cases we deal with, the function to project is the Bellman backup $\h = \T\f$,
and so it is convenient to denote 
the projected Bellman backup and the empirically projected backup with specific symbols, 
defined as 
\begin{align*}
	\gbest{\f} 
	& = \argmin_{\g\in\fClass}\norm{\g - \T\f}{\Dist}^2,
	\quad \text{and} \quad \\
	\numberthis{\label{eqn:BellmanProjectors}}
	\gEmp{\f} 
	& = \argmin_{\g\in\fClass}
	\frac{1}{\nSamples} \SumOverData{} 
	\Big(\g\psa - \Backup\f\pobs\Big)^2.
\end{align*}

\paragraph{Fitted Q}
\label{sec:AlgoAndLoss}
%
%
%

Fitted Q \cite{ernst2005tree} is a classical and well studied 
\cite{munos2005error,munos2008finite,chen2019information,duan2020minimax,fan2020theoretical} 
off-policy prediction and optimization algorithm.
In this paper we focus on the policy evaluation version of the algorithm, 
which starts from an initial iterate $\f_0 \in \fClass$ and updates it iteratively
by solving  
\begin{align*}
	\f_{k+1} & = \argmin_{\f \in \fClass} \frac{1}{\nSamples}  \SumOverData{} 
	\Big( \f \psa - \reward - \discount\f_k(\successorstate,\policy) \Big)^2.
\end{align*}
We indicate with $\fhatFQ$ the fixed point of fitted Q.

\subsection{Minimax Formulation and Inherent Bellman Error}
The fitted Q algorithm
is related to the minimax formulation
\cite{antos2008learning} in the sense that 
when fitted Q converges to a fixed point, 
such fixed point is a minimizer of the minimax formulation
\cite{chen2019information}.

\paragraph{Squared temporal difference cost}
Consider the following cost function,
which is the squared temporal difference error of the tuple 
$(\state,\action,\reward,\successorstate)$ 
evaluated using $\f$ as next-state value function 
and $\g$ as current function. It is defined as 
\begin{align}
\label{eqn:CostFunction}
\Cost(\g,\f) =  \Big(\g\psa - \reward - 
\discount \E_{\successoraction \sim \policy(\state)}\f(\successorstate, \successoraction) \Big)^2.
\end{align}
In order to find a predictor consistent with the dataset $\Dataset$,
one can try to minimize the empirical expectation of the above cost function
with $\g = \f$,
namely $\LossEmp(\f,\f)$ where\footnote{
It is useful to define the cost and its expectation by separating $\g$ and $\f$
in preparation for the discussion to follow.}
\begin{align*}
	\LossEmp(\g,\f) = \frac{1}{\card{\Dataset}}\SumOverData{} \Cost(\g,\f).
\end{align*}
Unfortunately, due to the double sampling issue \cite{baird1995residual,sutton2018reinforcement}, 
its expectation  
contains the bias term $\sigma(\f)^2$ 
(made explicit in \cref{lem:ExpectiCost}, but the fact is well known) 
representing the variance of the backup
\begin{align}
\label{eqn:ExpectiCost}
\E \Cost(\g,\f) 
= 
\norm{\g - \T\f}{\Dist}^2
+ \sigma^2(\f).
\end{align}
The variance term $\sigma(\f)^2$ arises even when $\g = \f$ in the cost function.
This implies that in the limit of infinite data the minimizer of $\E\Cost(\f,\f)$
must trade-off minimizing the mean-squared Bellman error $\norm{\f - \T\f}{\Dist}^2$ 
with minimizing the variance $\sigma(\f)^2$ of the backup.
The resulting procedure may converge to a solution different from the optimal predictor $\fstar$
even in the realizable setting.

\paragraph{A different cost function}
To remedy this issue\footnote{In practice iterative algorithms are used,
but the algorithm studied here is closely related to the iterative TD algorithms.}, 
the following cost function was introduced in \cite{antos2008learning}:
\begin{align*}
	\Cost(\f, \f) - \Cost(\g, \f).
\end{align*}
Compared to the squared TD cost function in \cref{eqn:CostFunction}, 
which would be minimized with $\g = \f$,
the modified cost function contains the correction term $- \Cost(\g, \f)$.
The expectation of the correction term generates the conditional variance of the backup $\sigma(\f)^2$
which then cancels the one present in $\E\Cost(\f,\f)$. 
We have $	\E [\Cost(\f, \f) - \Cost(\g, \f)] =$
\begin{align*}
	& =
	\norm{\f - \T\f}{\Dist}^2 + 
	\sigma(\f)^2 - \norm{\g - \T\f}{\Dist}^2
	- \sigma(\f)^2 \\
	& =
	\norm{\f - \T\f}{\Dist}^2 - \norm{\g - \T\f}{\Dist}^2.
	\numberthis{\label{eqn:ExpectedCostModified}}
\end{align*}
While the modified cost function is successful in cancelling the unwanted term $\sigma(\f)^2$,
it has introduced a different bias term represented by $\norm{\g - \T\f}{\Dist}^2$. 
In order to keep this bias at a minimum,  
the function $\g$ should be selected so as to minimize it, ideally as
\begin{align*}
	\gbest{\f} = \min_{\g\in\fClass} \norm{\g - \T\f}{\Dist}^2.
\end{align*} 
The population-level loss to minimize is \cite{antos2008learning}
\begin{align}
\label{eqn:minimax}
	\Minimax(\f)
	=
	\norm{\f - \T\f}{\Dist}^2 - \min_{\g\in\fClass}\norm{\g - \T\f}{\Dist}^2.
\end{align}

The resulting empirical program to minimize over $\f$ is
\begin{align*}
	\MinimaxEmp(\f) 
	& = 
	\LossEmp(\f,\f) - \min_{\g \in \fClass} \LossEmp(\g,\f)
\end{align*}
Its empirical minimizer $\fhat$ is of interest to us:
\begin{align*}
	\fhat \in \argmin_{\f \in \fClass} \MinimaxEmp(\f).
\end{align*} 
The fact that the fitted Q fixed point minimizes $\MinimaxEmp(\f)$
(see e.g., \cite{chen2019information}) motivates the study of the minimax formulation. 

\paragraph{Completeness removes the bias}
Despite the above effort to reduce the bias term,
the term $\inf_{\g\in\fClass}\norm{\g - \T\f}{\Dist}^2$ 
still affects the estimation quality of the mean-squared Bellman error,
and it is unclear whether that is better than $\sigma(\f)^2$. 
A notable case where such correction is desirable is when the bias term 
$\min_{\g\in\fClass}\norm{\g - \T\f}{\Dist}^2$ is zero for all $\f \in \fClass$, 
a condition called Bellman completeness. 
In this case, the population-level loss $\Minimax(\f)$ coincides with the mean-squared Bellman error,
i.e., under Bellman completeness we have
\begin{align}
\label{eqn:MinComp}
	\Minimax(\f) = \norm{\f - \T\f}{\Dist}^2.
\end{align}
Therefore, minimizing $\Minimax$ directly minimizes the mean-squared Bellman error.

\paragraph{Inherent Bellman errors}
When completeness starts to be violated, only part of $\T\f$ is `captured' by $\fClass$,
and an angle between the two arises, see  \cref{fig:BellmanIncomplete}.
Although in this cases the backup $\T\f$ is not contained in $\fClass$, 
we can still consider its projection onto $\fClass$ defined in \cref{eqn:BellmanProjectors}.
As the projection discards potentially useful informations about the backup $\T\f$,
we expect an error to arise. Such error is
the component of the backup $\T\f$ not captured by $\fClass$:
\begin{align}
\label{eqn:residual}
\inf_{\g \in \fClass}\norm{\g - \T\f}{\Dist}.	
\end{align}
An algorithm like fitted Q typically considers different functions $\f \in \fClass$
through its execution, and the projection error is propagated through the iterations. 
Moreover, such error term is present in the definition of the minimax program in \cref{eqn:minimax},
and so its presence seems to be unavoidable.
Generally, a \emph{worst-case} analysis is adopted, and
the worst-case value of the residual over $\f \in \fClass$ is called inherent Bellman error
of the function class $\fClass$ 
\begin{align}
	\label{eqn:IBE}
	\GlobalIBE
	=
	\sup_{\f \in \fClass} \inf_{\g\in\fClass} \norm{\g - \T\f}{\Dist}.
\end{align}
(Other definitions based on different norms are possible).
The inherent Bellman error is zero for the Bellman complete case in \cref{fig:BellmanComplete}; 
the less the Bellman backup is aligned with $\fClass$
the bigger it becomes (cfr. \cref{fig:BellmanIncomplete}).

%

\begin{SCfigure*}
\begin{tikzpicture}[hinge/.style = {fill=white, draw=black}, scale=0.70]
	\def\cOne{2}
	\def\cTwo{4}
	\def\sOne{8}
	\def\sTwo{8}
	\newcommand{\yOne}[1]{\cOne*sin(\sOne*#1)}
	\newcommand{\yTwo}[1]{\cTwo*sin(\sTwo*#1)}
  \coordinate (fstar) at (0,0);
  \coordinate (f1)    at (1,{\yOne{1}});
  \coordinate (Tf1)   at (2,{\yTwo{2}});
  \coordinate (gf1)   at (2.14,{\yOne{2.14}});
  \coordinate (f2)    at (4,{\yOne{4}});
  \coordinate (Tf2)   at (5.5,{\yTwo{5.5}});
  \coordinate (gf2)   at (5.75,{\yOne{5.75}});
  \coordinate (f3)    at (7,{\yOne{7}});
  \coordinate (gf3)   at (9.2,{\yOne{9.2}});
  \coordinate (Tf3)   at (9,{\yTwo{9}});
   \draw[->, densely dashed ] (f1)node[below]{$\f_1$}--(Tf1)node[above left]{$\T\f_1$};
  \draw[->, densely dashed ] (f2)node[below]{$\f_2$}--(Tf2)node[above]{$\T\f_2$};
  \draw[->, densely dashed ] (f3)node[below]{$\f_3$}--(Tf3)node[above]{$\T\f_3$};
  \draw[->, densely dotted ] (Tf1)--(gf1)node[below right]{$\Projector\T\f_1$};
  \draw[->, densely dotted ] (Tf2)--(gf2)node[below]{$\Projector\T\f_2$};
  \draw[->, densely dotted ] (Tf3)--(gf3)node[below]{$\Projector\T\f_3$};
  \draw[hinge] (0,0)circle(2pt);
  \draw[fill=black] (0,0)circle(1pt)node[left]{$\fstar$};
  \draw[thick,black,domain=0:10,smooth] plot (\x,{\yOne{\x}})node[right]{$\fClass$};
  \draw[thick,brown,domain=0:10,smooth] plot (\x,{\yTwo{\x}})node[right]{$\T\fClass$};
  \draw[black, fill = black] (f1)circle(1pt);
  \draw[black, fill = black] (f2)circle(1pt);
  \draw[black, fill = black] (f3)circle(1pt);
  \draw[black, fill = black] (Tf1)circle(1pt);
  \draw[black, fill = black] (Tf2)circle(1pt);
  \draw[black, fill = black] (Tf3)circle(1pt);
  \draw[black, fill = black] (gf1)circle(1pt);
  \draw[black, fill = black] (gf2)circle(1pt);
  \draw[black, fill = black] (gf3)circle(1pt);
\end{tikzpicture}
\caption{Local inherent Bellman errors. 
The norm of the un-captured component of the Bellman error $\Projector\T\f - \T\f$,
when maximized over $\f \in \fClass$,
is the inherent Bellman error.
For every function $\f \in \fClass$, such un-captured component 
is always a fraction of the Bellman error $\f-\T\f$.
When the Bellman error is reduced, 
its un-captured component also gets reduced.
This means that the `effective' inherent Bellman error seen by an algorithm decreases 
as the algorithm approaches the optimal predictor $\fstar$ along $\fClass$.
In order to leverage this observation in the analysis,
we \emph{localize} the inherent Bellman error
to a subset of functions where
the empirical predictor $\fhat$ returned by the minimax algorithm is expected to be.
In this way, we can replace the inherent Bellman error in \cref{eqn:IBE},
which is defined globally over $\fClass$,
with a more localized version defined over
a smaller class $\widetilde \fClass \subset\fClass$ that contains $\fhat$.}
\label{fig:explain}
\end{SCfigure*}

\section{Local Inherent Bellman Errors}
\label{sec:Localization}
In this section we introduce the core concept of this paper,
namely the local inherent Bellman errors and the related notion of $\BeCom$-incompleteness;
they are needed to convey the main message of the paper when Bellman completeness is violated. 
From a technical standpoint,
our development is inspired by the localization argument of \cite{bartlett2005local},
which is a now a standard tool in statistics 
to obtain fast regression rates \cite{wainwright2019high}.
Our use of localization, however, concerns a different quantity---the inherent Bellman error---and
brings an even more consequential improvement, i.e., that of removing the approximation error
term connected to the lack of Bellman completeness.

Some intuition is provided in \cref{fig:explain}, while the definitions are motivated as follows.
If Bellman completeness was satisfied then minimizing $\Minimax$
would directly minimize the mean-squared Bellman error,
see \cref{eqn:MinComp}.
When completeness is violated, our hope is that the mean-squared Bellman error is still minimized
by the minimax algorithm.
In other words, we hope that $\fhat$ enjoys small mean-squared Bellman error
$ 
	\norm{\fhat - \T\fhat}{\Dist}^2
$.
If that is the case, $\fhat$ must belong to the set of predictors $\fClass(\radius)$
whose Bellman error is, say, at most $\radius$ for some positive value $\radius$:
\begin{align*}
	\fClass(\radius) = \{ \f \in \fClass \mid \norm{\f - \T\f}{\Dist} \leq \radius \}.
\end{align*} 
If $\fhat$ is known to belong to the set $\fClass(\radius)$, 
the inherent Bellman error that should arise in a performance bound 
is one where the predictor $\f$ is restricted to the class $\fClass(\radius)$.
The value of the inherent Bellman error constructed in this way as a function of $\radius$
is what we call \emph{incompleteness function}.
\begin{definition}[Incompleteness Function]
	The incompleteness function $\Incompleteness$ 
	(or localized inherent Bellman error) is the function
	\begin{align*}
	\Incompleteness(\radius)
	=
	\sup_{\f\in\fClass(\radius)}\inf_{\g \in \fClass}\norm{\g - \T\f}{\Dist}
	.
\end{align*}
\end{definition}
In other words, the incompleteness function is the inherent Bellman error
\emph{localized} to the set of functions of small mean-squared Bellman error $\norm{\f-\T\f}{\Dist}$.
When $\radius \rightarrow \infty$, the localized inherent Bellman error
recovers the inherent Bellman error, i.e.,  $\Incompleteness(\infty) = \GlobalIBE$.
Notice that if the model is misspecified ($\fstar \not \in \fClass$)
then the set $\fClass(\radius)$ may be empty
for small values of $\radius$, and so the incompleteness function 
is defined only up to a certain value of $\radius$.
 
To summarize, our expectation is that the empirical solution $\fhat$ belongs to $\fClass(\radius)$
for an appropriate value of $\radius$.
In that case, the inherent Bellman error `felt' by the minimax algorithm
should be $\Incompleteness(\radius)$.
When $\radius$ decreases, the function $\Incompleteness(\radius)$ should also decrease 
because it is an error associated to a smaller set.
This intuition on the behavior of 
the local inherent Bellman errors is correct, 
and it is formalized by the following proposition,
which is proved in \cref{app:iBehav}.
\begin{proposition}[Behavior of Local Inherent Bellman Errors] 
\label{prop:iBehav}
The following holds true:
	\begin{itemize}
		\item $\Incompleteness(\radius)$ is increasing with $\radius$;
		\item if realizability holds then $\Incompleteness(0) = 0$.
	\end{itemize}
\end{proposition}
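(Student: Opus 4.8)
The plan is to dispatch the two claims separately, each by an elementary argument. For the monotonicity claim I would exploit that the localized classes $\fClass(\radius)$ are nested in $\radius$; for the second claim I would combine the fact that $\fstar$ is a fixed point of the Bellman operator with the trivial bound obtained by taking $\g = \f$ as a witness in the inner infimum. No regularity of $\T$ or of $\fClass$ beyond the stated definitions should be needed.

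First, for monotonicity, suppose $\radius_1 \leq \radius_2$. Any $\f$ satisfying $\norm{\f - \T\f}{\Dist} \leq \radius_1$ also satisfies $\norm{\f - \T\f}{\Dist} \leq \radius_2$, so $\fClass(\radius_1) \subseteq \fClass(\radius_2)$. Taking the supremum of the fixed quantity $\inf_{\g \in \fClass}\norm{\g - \T\f}{\Dist}$ over the larger index set can only increase its value, giving $\Incompleteness(\radius_1) \leq \Incompleteness(\radius_2)$. That is the entire argument.

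Second, for the claim that realizability implies $\Incompleteness(0) = 0$, I would argue in two steps. The feasible set at radius zero is $\fClass(0) = \{\f \in \fClass \mid \norm{\f - \T\f}{\Dist} = 0\}$, i.e. the $\Dist$-almost-everywhere fixed points of $\T$ lying in $\fClass$. I first note this set is nonempty: the action value function $\fstar$ obeys the Bellman equation $\T\fstar = \fstar$ by its very definition, so $\norm{\fstar - \T\fstar}{\Dist} = 0$, and realizability places $\fstar \in \fClass$, whence $\fstar \in \fClass(0)$. Next, for any $\f \in \fClass(0)$ we have $\T\f = \f$ in the $\Dist$-norm, and since $\f$ itself lies in $\fClass$ it is admissible as a choice of $\g$; therefore $\inf_{\g \in \fClass}\norm{\g - \T\f}{\Dist} \leq \norm{\f - \T\f}{\Dist} = 0$, and nonnegativity of the norm forces this inner infimum to equal zero. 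Since every term in the supremum defining $\Incompleteness(0)$ vanishes and the index set is nonempty, the supremum is exactly zero.

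The argument is short enough that there is no genuine obstacle; the only point deserving care is the justification that $\fstar$ is a Bellman fixed point, i.e. $\fstar = \T\fstar$, which is immediate from unrolling the definition of $\fstar$ as the discounted return and matching it against $(\T\fstar)\psa = \reward\psa + \discount\,\E_{\SState^+}\fstar(\SState^+,\policy)$. One should also record that nonemptiness of $\fClass(0)$ is precisely what makes the supremum well-defined and equal to zero rather than vacuous, and this is exactly where realizability enters: without it $\fClass(0)$ could be empty and the statement $\Incompleteness(0) = 0$ would carry no content.
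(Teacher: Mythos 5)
Your proof is correct and follows essentially the same route as the paper's: monotonicity via the nesting $\fClass(\radius_1) \subseteq \fClass(\radius_2)$ of the localized classes, and $\Incompleteness(0)=0$ by using realizability to ensure $\fstar \in \fClass(0)$ (so the supremum is over a nonempty set) and then bounding the inner infimum by the choice $\g = \f$, which forces every term to vanish. Your additional remarks---that $\fstar = \T\fstar$ follows from the definition of the action-value function, and that nonemptiness of $\fClass(0)$ is exactly where realizability is needed---are points the paper leaves implicit, but they do not change the argument.
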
 
\cref{fig:Complete,fig:Linear,fig:Quadratic} illustrate 
possible shapes for the incompleteness function
in the realizable case, while \cref{fig:Misspecified} shows one where realizability
is violated (i.e., when $\fstar \not \in \fClass$).

In the sequel we focus on the realizable case to make the analysis clearer, i.e., on function classes that satisfy \cref{asm:Realizability}.
Although in this case the local inherent Bellman error always converges to zero,
it might do so at different speeds. 
The average rate of convergence to zero 
is denoted with $\BeCom$ and it determines the problem complexity.

\newcommand{\PlotIncompletenessFunction}[3]
{
    \begin{tikzpicture}
    \def\scale{3.5}
    \def\ExtFactor{0.1}
        \draw[->] (0,0) -- (\scale,0) node[below] {$\radius$};
        \draw[->] (0,0) -- (0,0.5*\scale) node[left] {$\Incompleteness(\radius)$};
        \draw[thick,purple,domain=#3 :\scale,smooth] plot (\x,{#1});
        #2 
    \end{tikzpicture}
}

\begin{figure*}
    \begin{subfigure}[b]{0.18\textwidth}
    	\PlotIncompletenessFunction{0*\x}{}{0}
        \caption{}
        \label{fig:Complete}
    \end{subfigure}
    \qquad %
    \begin{subfigure}[b]{0.18\textwidth}
    	\PlotIncompletenessFunction{0.3*\x}{}{0}
        \caption{}
        \label{fig:Linear}
    \end{subfigure}
    \qquad %
    ~ 
    \begin{subfigure}[b]{0.18\textwidth}
    	\PlotIncompletenessFunction{0.1*\x^2}
    	{\draw[dashed, brown] (0,0) -- ({\scale},{0.1*(\scale)^2}) node[midway, right, above, sloped]{$\BeCom\radius$};}{0}
        \caption{}
        \label{fig:Quadratic}
    \end{subfigure}
    \qquad
    \begin{subfigure}[b]{0.18\textwidth}
    	\PlotIncompletenessFunction{0.3 + 0.3*sqrt(\x)}{}{0.2}
        \caption{}
        \label{fig:Misspecified}
    \end{subfigure}
    \caption{Stylized representations of possible shapes of $\Incompleteness$}
    \label{fig:iFig}
\end{figure*}

\subsection{\texorpdfstring{$\BeCom$}{beta}-incomplete MDPs}
Let us gain some intuition by considering a linear problem,
namely one where the function class $\fClass$ is linear. 
It is defined by a feature extractor $\LinPhi$ that maps state-action pairs 
to real vectors in $\Reals^\dim$, 
as $\fClassLin = \{ \LinPhi^\top \LinPar \mid \LinPar \in \Reals^\dim \}$.

When the class is linear and realizability holds,
the localized inherent Bellman error $\Incompleteness(\cdot)$ 
always increases at a linear rate, a fact that we verify in \cref{sec:iLin}.
\begin{proposition}[Linearly Incomplete MDPs]
\label{prop:iLin}
If $\fClass = \fClassLin$ then $\Incompleteness(\radius) = \BeCom\radius$ for all $\radius\geq 0$.
\end{proposition}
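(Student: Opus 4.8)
The plan is to reduce the inner infimum to an orthogonal projection, reparametrize the whole expression in terms of the Bellman residual $\T\f-\f$, and then exploit positive homogeneity. Since $\fClassLin$ is a finite-dimensional linear subspace, the projection $\Projector$ is the orthogonal projection onto it with respect to the inner product inducing $\norm{\cdot}{\Dist}$, so the inner minimization is exactly $\inf_{\g\in\fClass}\norm{\g-\T\f}{\Dist}=\norm{(I-\Projector)\T\f}{\Dist}$. The first key step is to observe that every $\f\in\fClass$ is fixed by the projection, $\Projector\f=\f$, so that $(I-\Projector)\T\f=(I-\Projector)(\T\f-\f)$. Hence the objective depends on $\f$ only through the residual $\rho=\T\f-\f$:
\begin{align*}
\inf_{\g\in\fClass}\norm{\g-\T\f}{\Dist}=\norm{(I-\Projector)\rho}{\Dist}.
\end{align*}

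Next I would show that the set of achievable residuals is a linear subspace. Writing $\f=\LinPhi^\top\LinPar$ and $\overline\phi\psa=\E_{\successorstate}\LinPhi(\successorstate,\policy)$, the residual is the affine map $\rho=\reward+(\discount\overline\phi-\LinPhi)^\top\LinPar$. Realizability (\cref{asm:Realizability}) supplies a parameter $w^\star$ with $\fstar=\LinPhi^\top w^\star$ and $\T\fstar=\fstar$, i.e.\ one whose residual vanishes; subtracting it gives $\rho=(\discount\overline\phi-\LinPhi)^\top(\LinPar-w^\star)$, a linear function of $u=\LinPar-w^\star$. Therefore the residual set $\mathcal R=\{\T\f-\f:\f\in\fClass\}$ is a linear subspace, the constraint defining $\fClass(\radius)$ is exactly $\norm{\rho}{\Dist}\leq\radius$, and as $\f$ ranges over $\fClass(\radius)$ the residual $\rho$ ranges over the whole ball $\{\rho\in\mathcal R:\norm{\rho}{\Dist}\leq\radius\}$.

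The last step is the homogeneity argument. Combining the two reductions,
\begin{align*}
\Incompleteness(\radius)=\sup_{\rho\in\mathcal R,\ \norm{\rho}{\Dist}\leq\radius}\norm{(I-\Projector)\rho}{\Dist}.
\end{align*}
Both $\rho\mapsto\norm{\rho}{\Dist}$ and $\rho\mapsto\norm{(I-\Projector)\rho}{\Dist}$ are positively homogeneous of degree one and $\mathcal R$ is a cone, so substituting $\rho=\radius\rho'$ pulls the factor $\radius$ out of both the constraint and the objective. This yields $\Incompleteness(\radius)=\BeCom\,\radius$ with
\begin{align*}
\BeCom=\sup_{\rho\in\mathcal R,\ \norm{\rho}{\Dist}\leq 1}\norm{(I-\Projector)\rho}{\Dist},
\end{align*}
the operator norm of $I-\Projector$ restricted to $\mathcal R$, which is at most one since orthogonal projections are contractions; the degenerate case $\radius=0$ forces $\rho=0$ and hence $\Incompleteness(0)=0$, consistent with the formula.

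I expect the main obstacle to be the reparametrization step rather than the homogeneity, which is routine once the reduction is in place. The care needed is in using realizability to upgrade the residual set from an affine to a genuinely linear subspace passing through the origin, and in verifying that the objective depends on $\f$ \emph{only} through $\rho$---this is precisely what decouples it from the constraint and makes the problem scale invariant. One should also confirm that $\f\mapsto\rho$ is onto $\mathcal R$, so that the supremum over $\fClass(\radius)$ and the supremum over the residual ball coincide even when $\LinPhi$ is rank deficient and the parametrization is not injective.
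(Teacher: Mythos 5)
Your proof is correct and follows essentially the same route as the paper's: both use realizability ($\T\fstar=\fstar$, $\fstar\in\fClass$) together with linearity of $\fClass$ and of $\Projector$ to recast the constraint and the objective as positively homogeneous functions of a single linear variable, and then conclude by scaling. The only cosmetic differences are that you reparametrize by the Bellman residual $\T\f-\f$ and pull the factor $\radius$ out directly, whereas the paper keeps the increment $\f-\fstar$ as the variable, writes both norms as linear operators applied to it, and establishes the linear growth of $\Incompleteness$ via a short contradiction argument.
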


In this case, we say that the system is $\BeCom$-incomplete.
When $\BeCom = 0$, the MDP is linear Bellman complete 
\cite{zanette2020learning,duan2020minimax} and that 
corresponds to the situation in \cref{fig:Complete}.
On the contrary, the higher $\BeCom$ is, and the farther from $\fstar$
(i.e., the higher the radius $\radius$),
the more Bellman completeness is violated,
a situation in display in \cref{fig:Linear}. 

When $\fClass$ is non-linear we expect the local inherent Bellman error $\Incompleteness$ 
to exhibit a more complex behavior. 
It must still comply with \cref{prop:iBehav}, 
namely it must start from zero and increase as the radius increases. 
In these cases it is a good idea to define a quantity to 
capture its global behavior.
Such quantity should put a bound on the average rate of increase of $\Incompleteness$,
i.e., such that 
\begin{align}
\label{eqn:BoundedIncrease}
	\Incompleteness(\radius) \leq \BeCom\radius.
\end{align}
With this goal in mind, we give the following definition for $\BeCom$,
one that applies to the linear and the non-linear setting.
\begin{definition}[$\BeCom$-incompleteness]
\label{def:BeCom}
The incompleteness factor $\BeCom$, 
or mis-alignment between $\fClass$ and its image $\T\fClass$, 
is the scalar quantity defined as
	\begin{align}
		\sup_{\f\in\fClass} \inf_{\g \in \fClass}
	\frac{\norm{\g - \T\f}{\Dist}}{\norm{\f - \T\f}{\Dist}}
	= \BeCom.
	\end{align}
\end{definition}
In other words, $\BeCom$ represents the maximum \emph{fraction} 
of the Bellman error $\norm{\f-\T\f}{\Dist}$ that is not captured by $\fClass$.
When Bellman completeness holds, 
$\inf_{\g\in\fClass}\norm{\g - \T\f}{\Dist} = 0$ for all $\f\in\fClass$,
and thus $\BeCom = 0$.
In the worst case, $\g$ in the numerator in \cref{def:BeCom}
can at least be chosen equal to $\f$, 
in which case we have $\BeCom = 1$.
More generally, $\BeCom$ is a number between zero and one.
The closer it is to zero, the more Bellman complete the MDP is,
in the sense that completeness gets violated more slowly
when moving away from $\fstar$. 
See \cref{fig:Quadratic} for a visual definition of $\BeCom$.
It can be shown that \cref{def:BeCom} leads to the desired behavior in display in \cref{eqn:BoundedIncrease},
since $\Incompleteness(\radius) / \radius$ can be written as
\begin{align*}
	= \sup_{\f\in\fClass(\radius)} \inf_{\g \in \fClass}
	\frac{\norm{\g - \T\f}{\Dist}}{\radius} 
	\leq 
	\sup_{\f\in\fClass(\radius)} \inf_{\g \in \fClass}
	\frac{\norm{\g - \T\f}{\Dist}}{\norm{\f - \T\f}{\Dist}}
	\leq \BeCom.
\end{align*}

How is \cref{def:BeCom} useful for prediction?
Intuitively, the numerator 
$\inf_{\g \in \fClass} \norm{\g - \T\f}{\Dist}$ in \cref{def:BeCom} 
represents some form of approximation error for the backup $\T\f$;
the division by the denominator scales such approximation error 
with respect to the mean-squared Bellman error,
which is the quantity that we wish to reduce.
When the latter is reduced, the approximation error is also reduced,
and the Bellman backup is more faithfully represented. 
In other words, the approximation error must vanish as we approach $\fstar$.

Another possible connection is with the double-sampling issue \cite{baird1995residual}.
Although the mean-squared Bellman error cannot be accurately estimated
without Bellman completeness 
(see e.g. \cite{duan2021risk} for a recent lower bound), 
$\BeCom$-incompleteness ensures that we can estimate it 
with a certain accuracy relative to its magnitude,
and in particular, more accurately for the important functions that are closer to $\fstar$.

\section{Error Bounds on Bellman-Incomplete MDPs}
In this section we present our main results, 
which are off-policy error bounds on the prediction error $\abs{\PE(\fhat)}$ 
for the minimizer $\fhat$ of the empirical loss $\MinimaxEmp$.
These error bounds apply to the limit point for fitted Q 
when it exists \cite{chen2019information}.

\paragraph{Concentrability}
It is useful to introduce the following concentrability 
coefficient \cite{chen2019information,xie2021bellman},
which represents the increase in the mean-squared Bellman error when
moving from the data-generating distribution $\Dist$ 
to that induced by the target policy $\policy$
\begin{align*}
	\Concentrability 
	= 
	\sup_{\f \in \fClass} 
	\frac{\norm{\f - \T\f}{\policy}^2}{\norm{\f - \T\f}{\Dist}^2}.
\end{align*}
As the proof shall clarify, the minimax procedure indirectly attempts to minimize  
the mean-squared Bellman error over $\Dist$ 
(even though it cannot estimate it properly), 
while the prediction error is related to that over $\dof{\policy}$.
Therefore, the concentrability coefficient\footnote{Some weaker upper bounds, 
which have the advantage of being independent of $\fClass$,
are the following:
$$\Concentrability \leq 
	\E_{\psa \sim \Dist}
	\Big[\frac{\dpi{\policy}\psa}{\Dist\psa}\Big]^2
	\leq
	\sup_{\psa}
	\frac{\dpi{\policy}\psa}{\Dist\psa}. $$} translates how minimizing the mean-squared
Bellman error over $\Dist$ affects that over $\dpi{\policy}$, and hence the prediction error.
The higher the value of $\Concentrability$, the less effective the minimax algorithm is,
because the value of the mean-squared Bellman error over $\Dist$ 
is less representative of the prediction error.

\subsection{Error bounds with finite classes}
\label{sec:EBFiniteClasses}
For simplicity, let us present the main findings first when 
the cardinality of $\fClass$ is finite.
\begin{theorem}[Error Bound with Finite Classes]
\label{thm:MinimaxErrorBoundFC}
With probability at least $1-\FailureProbability$, 
the prediction error of the minimizer $\fhat$ satisfies the bound
\begin{align}
	\label{eqn:MinimaxErrorBoundFC}
	\abs{\PE(\fhat)}
	& \leq 
	\frac{1}{1-\discount}\frac{1}{1-\BeCom} 
	\sqrt{\frac{\Concentrability \ln(\card{\fClass}/\FailureProbability)}{\nSamples}}.
\end{align}
\end{theorem}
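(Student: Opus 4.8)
The plan is to pass from the prediction error to the mean-squared Bellman error of $\fhat$ under $\Dist$, to control that Bellman error through the population minimax loss $\Minimax(\fhat)$, and finally to bound $\Minimax(\fhat)$ by a \emph{localized} concentration argument that exploits the optimality of $\fhat$ for $\MinimaxEmp$ together with realizability.

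First I would establish the deterministic reduction
\[
\abs{\PE(\fhat)} \leq \frac{1}{1-\discount}\norm{\fhat - \T\fhat}{\policy} \leq \frac{\sqrt{\Concentrability}}{1-\discount}\norm{\fhat - \T\fhat}{\Dist}.
\]
The first inequality is the standard telescoping identity $(\fstar - \fhat)(\state_0,\policy) = \tfrac{1}{1-\discount}\E_{\psa\sim\dpi{\policy}}[(\T\fhat - \fhat)\psa]$, obtained by unrolling the fixed-point relation $\fstar = \T\fstar$ and recognizing the discounted occupancy measure $\dpi{\policy}$; Cauchy--Schwarz then bounds the $\dpi{\policy}$-average by $\norm{\fhat - \T\fhat}{\policy}$. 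The second inequality is just the definition of $\Concentrability$. This isolates the two problem-dependent constants $\tfrac{1}{1-\discount}$ and $\sqrt{\Concentrability}$, reducing the task to proving $\norm{\fhat - \T\fhat}{\Dist} \lesssim \tfrac{1}{1-\BeCom}\sqrt{\ln(\card{\fClass}/\FailureProbability)/\nSamples}$.

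Next I would sandwich $\Minimax(\fhat)$. For the lower bound, $\BeCom$-incompleteness (\cref{def:BeCom}) gives $\min_{\g\in\fClass}\norm{\g-\T\fhat}{\Dist}^2 \leq \BeCom^2\norm{\fhat-\T\fhat}{\Dist}^2$, hence $\Minimax(\fhat) \geq (1-\BeCom^2)\norm{\fhat-\T\fhat}{\Dist}^2$. For the upper bound I would run the chain $\Minimax(\fhat) \leq \MinimaxEmp(\fhat) + (\text{deviation}) \leq \MinimaxEmp(\fstar) + (\text{deviation})$, where the middle step is optimality of $\fhat$ for $\MinimaxEmp$. The deviations arise from comparing $\P[\Cost(\f,\f)-\Cost(\g,\f)]$ against $\Pn[\Cost(\f,\f)-\Cost(\g,\f)]$; because the variance terms $\sigma^2(\f)$ cancel in the modified cost (as in \cref{eqn:ExpectedCostModified}), each such increment has expectation exactly $\norm{\f-\T\f}{\Dist}^2 - \norm{\g-\T\f}{\Dist}^2$, which is what makes the sandwich consistent.

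The crux---and the step I expect to be the main obstacle---is the localized concentration. The increment $\Cost(\f,\f)-\Cost(\g,\f)$ factors pointwise as $(\f-\g)\psa$ times a bounded quantity, so its variance is $O(\norm{\f-\g}{\Dist}^2)$; evaluating at the population projection $\gbest{\f}$ and using the triangle inequality with \cref{def:BeCom} yields $\norm{\f - \gbest{\f}}{\Dist} \leq (1+\BeCom)\norm{\f-\T\f}{\Dist}$, so the variance proxy is governed by the Bellman-error radius of $\f$ itself. A union bound of Bernstein's inequality over the $\card{\fClass}^2$ pairs then gives, uniformly, a deviation of order $(1+\BeCom)\norm{\f-\T\f}{\Dist}\sqrt{\ln(\card{\fClass}/\FailureProbability)/\nSamples} + \ln(\card{\fClass}/\FailureProbability)/\nSamples$. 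Realizability makes $\fstar=\T\fstar$, so the same Bernstein estimate forces $\MinimaxEmp(\fstar)\lesssim \ln(\card{\fClass}/\FailureProbability)/\nSamples$ (optimizing the concave-in-$\norm{\g-\fstar}{\Dist}$ bound). Writing $x = \norm{\fhat-\T\fhat}{\Dist}$ and combining the sandwich produces the self-bounding quadratic $(1-\BeCom^2)x^2 \lesssim (1+\BeCom)x\sqrt{\ln(\card{\fClass}/\FailureProbability)/\nSamples} + \ln(\card{\fClass}/\FailureProbability)/\nSamples$; solving for $x$ and using $(1-\BeCom^2)=(1-\BeCom)(1+\BeCom)$ cancels the $(1+\BeCom)$ factors and leaves $x \lesssim \tfrac{1}{1-\BeCom}\sqrt{\ln(\card{\fClass}/\FailureProbability)/\nSamples}$, which with the first display gives the claim. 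The delicate points are that the variance proxy depends on the \emph{random} $\fhat$---resolved because the union bound over the finite class makes the per-$\f$ estimate hold simultaneously, so the self-bounding inequality may be instantiated at $\f=\fhat$---and the careful tracking of absolute constants and boundedness so that exactly one factor $\tfrac{1}{1-\discount}$ survives.
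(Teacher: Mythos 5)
Your proposal is correct and follows essentially the same route as the paper: your deterministic reduction is the paper's weak simulation lemma (\cref{lem:WeakSimLem}) combined with the definition of $\Concentrability$, your lower bound $\Minimax(\fhat) \geq (1-\BeCom^2)\norm{\fhat-\T\fhat}{\Dist}^2$ is \cref{lem:BeInc}, your bound $\MinimaxEmp(\fstar) \lesssim \ln(\card{\fClass}/\FailureProbability)/\nSamples$ via realizability and the concave optimization in $\norm{\g-\fstar}{\Dist}$ is exactly the paper's $Y$-process step (\cref{lem:Yvar}), and your Bernstein-with-variance-control step at the pair $(\f,\gbest{\f})$ is \cref{lem:Xvariance}. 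The only difference is bookkeeping: you solve a self-bounding quadratic in $\norm{\fhat-\T\fhat}{\Dist}$, whereas the paper organizes the same localization into multiplicative failure events over the set $\{\f \mid \P\Xvar(\f) > \criticalradius^2\}$ with critical radius $\criticalradius^2 \simeq \ln(\card{\fClass}/\FailureProbability)/((1-\BeCom)\nSamples)$ --- machinery it needs for the general Rademacher-based result (\cref{thm:MinimaxErrorBoundGeneral}) but equivalent to your argument for finite classes.
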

The proof is in \cref{sec:MainAnalysis}.
The bound above exhibits a typical dependence on several factors: 
the log failure probability $\ln(1/\FailureProbability)$,
the square-root of the number of samples $\nSamples$,
the effective horizon $\frac{1}{1-\discount}$, 
the metric entropy $\ln(\card{\fClass})$
and the concentrability factor $\Concentrability$.
However, the key novelty is the presence of the pre-factor $\frac{1}{1-\BeCom}$
that measures the lack of Bellman completeness, and the absence of the inherent Bellman error.
Practically speaking, the form of the equation suggests that realizability is sufficient whenever 1) $\BeCom < 1$, 
and 2) the TD method converges.
When $\BeCom =1$, off-policy learning is unviable without additional `domain knowledge' 
because the projected Bellman equations---which TD methods aim to solve---may have multiple solutions.

Compared to the state of the art 
\cite{chen2019information,jin2021bellman,xie2021bellman,duan2021risk}
analyses of the minimax algorithm, the use of the local inherent Bellman errors 
has transformed the approximation error term $\GlobalIBE$
into the pre-factor $\frac{1}{1-\BeCom}$ that multiplies the rate of convergence.
In other words, \cref{eqn:MinimaxErrorBoundFC} establishes that
the lack of Bellman completeness does not generate an approximation error---the
inherent Bellman error---but instead it affects the rate of convergence.

The factor $\frac{1}{1-\BeCom}$ could also be interpreted 
	as the cost, in terms of sample complexity,
	of moving from the double-sampling regime\footnote{
	We say that double samples are available when the available dataset 
	contains two independent transitions for each tuple. 
	More precisely, it contains tuples
	$(\state,\action,\reward,\successorstate, \successorstate_+)$
	such that $\successorstate_+\sim \Transition{}\psa$ 
	and $\successorstate \sim \Transition{}\psa $ are independent successor states,
	a condition hardly met outside of simulated domains or deterministic MDPs.}
 	to the single-sampling regime
	in off-policy reinforcement learning; the work of \cite{duan2021risk} 
	can be used to compare our sample complexity 
	with that of methods based on Bellman residual minimization
	in the double-sampling regime.

It is instructive to examine in more details the three key components  
that determine the sample complexity.
\begin{itemize}[leftmargin=*]
	\item The \textbf{metric entropy}, represented by $\ln(\card{\fClass})$, 
	 arises already in supervised learning \cite{wainwright2019high}.
	\item The \textbf{distribution shift}, represented by the concentrability coefficient $\Concentrability$, arises (as a simplified expression that does not depend on the Bellman operator) if distribution shift is present
	 in supervised learning.
	 \item The \textbf{incompleteness factor}, represented by $\frac{1}{1-\BeCom}$, measures the adequacy of the chosen function class with respect to the Bellman operator $\BellmanOperator$; this is the key factor that 
	 distinguishes the reinforcement learning setting from single-step processes, because it \emph{involves the Bellman operator}.
	 Notice that the notion of $\BeCom$-incompleteness is not an assumption: 
	 the value for $\BeCom$ can always be computed, and its knowledge is not
	 required by the algorithm. 
	 Much like the concentrability coefficient measures the degradation in performance as the target policy $\policy$ visits different state-action pairs than the dataset distribution $\DatasetDist$, 
	 the incompleteness factor $\BeCom$ represents the loss of efficiency as the chosen function class becomes more and more mis-aligned with the Bellman backups.
\end{itemize}

Finally, it is worth to highlight the following fact \cite{chen2019information}:
if fitted Q converges, its limit point must inherit the bound of \cref{thm:MinimaxErrorBoundFC},
and so our completeness-free result applies to the solution found by fitted Q.

\cref{thm:MinimaxErrorBoundFC} already contains the key innovation of this paper.
However, the result only applies to finite classes, which are statistically simple 
but also unstructured: they are non-convex and non-differentiable and hence the above result
cannot be applied to gradient-based methods such as TD.
We deal with more expressive models in \cref{sec:EBGeneral},
and make additional considerations in \cref{sec:OPC}.

\newcommand{\plotsimple}[3]
{
\begin{tikzpicture}[hinge/.style = {fill=white, draw=black}]
\def\normalization{(1+cos(#1))}
\def\scale{2.7}
\def\Mult{1.2}
\def\FclassOffset{0.5}
  \coordinate (f) at (0,0);
  \coordinate (fangle) at (0.5,0);
  \coordinate (Tf)   at ({\scale*cos(#1)/\normalization},{\scale*sin(#1)/\normalization});
  \coordinate (gf)   at ({\scale*cos(#1)/\normalization},0);
  \coordinate (fend)  at ({\Mult*\scale*cos(#1)/\normalization + \FclassOffset},0);
  \draw[->, thick, black] (f)node[left]{$\f$}--(Tf) node[above]{$\T\f$};
  \draw[-, densely dashed ] (Tf)--(gf)node[below ]{#3};
  \draw[->, thick, brown ] (f)--(fend)node[right]{$\fClass$};
  \draw[hinge] (f)circle(2pt);
  \draw[fill=black] (f)circle(1pt);
  \draw[black, fill = black] (Tf)circle(1pt);
  \draw[black, fill = black] (gf)circle(1pt);
  #2
\end{tikzpicture}
}

\begin{figure*}
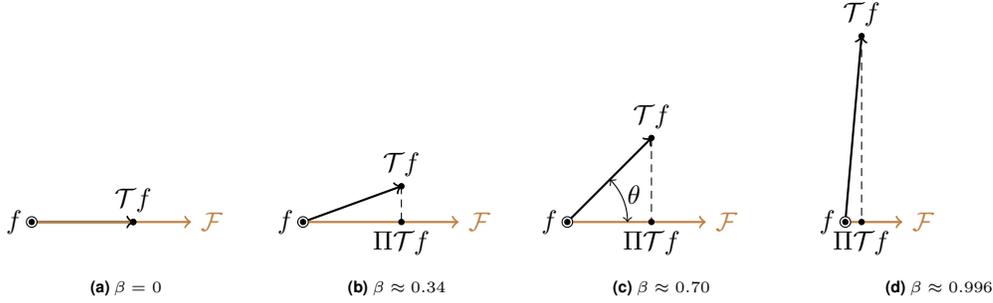

    \centering
    \begin{subfigure}[b]{0.2\textwidth}
    	\plotsimple{0}{}{\vphantom{$\Projector{\T\f}$}}
        \caption{$\BeCom = 0$}
        \label{fig:0}
    \end{subfigure}%
    ~ 
    \begin{subfigure}[b]{0.2\textwidth}
    	\plotsimple{20}{}{$\Projector{\T\f}$}
        \caption{$\BeCom \approx 0.34$}
        \label{fig:20}
    \end{subfigure}
        \begin{subfigure}[b]{0.2\textwidth}
    	\plotsimple{45}
    	{  
    	\pic [draw, <->, angle radius=8mm, angle eccentricity=1.2, "$\theta$"] 
    	{angle = gf--f--Tf}; 
    	}
    	{$\Projector{\T\f}$}
        \caption{$\BeCom \approx 0.70$}
        \label{fig:45}
    \end{subfigure}
    ~ 
    \begin{subfigure}[b]{0.2\textwidth}
    	\plotsimple{85}{}{$\Projector{\T\f}$}
        \caption{$\BeCom \approx 0.996$}
        \label{fig:85}
    \end{subfigure}
    \caption{Local alignments between the Bellman backup $\T\f$ and the class $\fClass$
    for various values of $\BeCom$.
    The setting in \cref{fig:0} is traditionally called `Bellman complete'.
    In this simple example $\BeCom = \sin\theta$.}
    \label{fig:BeCom}
\end{figure*}

\subsection{Comparison with existing guarantees}
In reinforcement learning analyses for model free algorithms, an approximation error term is present even if the problem is realizable,
i.e., even if the action value function $\fstar$ of the target policy is contained in $\fClass$.
Precisely, the approximation error term is the inherent Bellman error of the function class $\fClass$.
A typical bound\footnote{Notice that these papers study the case where $\BellmanOperator$ is the Bellman optimality operator, which leads to slightly different expressions.} \cite{munos2008finite,chen2019information} for the minimax variant reads
\begin{align}
	\label{eqn:OffPolicyPredictionIBE}
	\abs{ \Vpi{\policy} - \Vhat^\policy}  \lessapprox 
	\underbrace{\frac{1}{1-\discount} \sqrt{\frac{\Concentrability 
	\ln(\card{\fClass}/\FailureProbability)}{\nSamples}}}_{\text{stat error}}
	+ 
	\underbrace{ \frac{\sqrt{\Concentrability}}{1-\discount}  \vphantom{\sqrt{\frac{\Concentrability \ln\card{\fClass}}{\nSamples}}} 
	\GlobalIBE.}_{\text{approx error}}
\end{align}
According to \cref{eqn:OffPolicyPredictionIBE},
the prediction error can be reduced only up to an 
error floor represented by the inherent Bellman error $\GlobalIBE$
of the function class $\fClass$.

\begin{SCfigure}
		\begin{tikzpicture}[hinge/.style = {fill=white, draw=black}]
		\def\localvarlimit{88}
		\def\Tfangle{60}
		\def\sc{0.14\textwidth}
		\def\Mult{1.2}
		\def\MultTf{1.1}
		\def\mycolor{red!80!black}
		\def\FclassOffset{5}
		  \coordinate (o) at (0,0);
		  \coordinate (upperlimit)   at ({\sc*cos(90)},{\sc*sin(90)});
		  \coordinate (Tflimit)   at ({\sc*cos(\localvarlimit)},{\sc*sin(\localvarlimit)});
		  \coordinate (Tf)   at ({\MultTf*\sc*cos(\Tfangle)},{\MultTf*\sc*sin(\Tfangle)});
		  \coordinate (Tfend)  at ({\sc*cos(0)},0);
		  \draw[gray] (Tfend) arc (00:90:\sc);
		  \draw[-, thick, \mycolor] (o)node[left]{}--(upperlimit) node[above left]{$\BeCom = 1$};
		  \draw[-, thick, black] (o)node[left]{}--(Tflimit) node[above]{};
		  \draw[->, thick, black ] (o)--(Tfend)node[right]{$\BeCom = 0$};
		  \draw[->, thick, black ] (o)--(Tf)node[right]{$\BeCom$};
		  \draw[hinge] (o)circle(2pt);
		  \draw[fill=black] (o)circle(1pt);
		  \draw[\mycolor, fill = \mycolor] (upperlimit)circle(1pt);
		  \draw[\mycolor, fill = \mycolor] (Tflimit)circle(1pt);
		  \draw[thick,\mycolor,pattern=north east lines,pattern color=\mycolor] 
		  (o) -- (Tflimit)--(upperlimit);
		\end{tikzpicture}
  \caption{Off-policy reinforcement learning remains viable  
  for values of $\BeCom$ in the range $[0,1)$,
  while prior analyses expected an unavoidable inherent Bellman error to arise.
  The red shaded area, which corresponds to $\BeCom \rightarrow 1$,
  represents problems where the sample complexity becomes unmanageably large,
  a condition in force in the lower bounds.}
  \label{fig:Learnability}
\end{SCfigure}
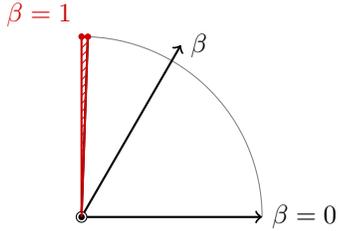

\cref{fig:0,fig:20,fig:45,fig:85} display some Bellman errors 
to help appreciate the results of this paper
and the informal definition of $\BeCom$.
When Bellman completeness holds such as in \cref{fig:0}, 
the class $\fClass$ fully captures the Bellman backup and thus $\BeCom = 0$ 
(no component of the Bellman error is left un-captured).
In this case, the existing bound in \cref{eqn:OffPolicyPredictionIBE}
and the new one in \cref{thm:MinimaxErrorBoundFC}
both reduce to $\abs{\Vpi{\policy} - \Vhat^\policy}  \lessapprox \frac{1}{1-\discount}
\sqrt{\frac{\Concentrability \ln(\card{\fClass}/\FailureProbability)}{\nSamples}} $.

The difference between the new analysis and the existing ones becomes stark when 
completeness is violated.
For example, in \cref{fig:45}, the Bellman backup $\T\f$ is mis-aligned with respect to $\fClass$,
and the residual in \cref{eqn:residual} can be quite large if the Bellman error $\f - \T\f$
is also large. 
For the specific example in \cref{fig:45}, 
the residual in \cref{eqn:residual}
is roughly a fraction $\BeCom \approx 0.7$ of the full Bellman error,
i.e., $\inf_{\g \in \fClass}\norm{\g - \T\f}{\Dist} \approx \BeCom \norm{\f - \T\f}{\Dist}$.
If the Bellman error happens to be large, say $\norm{\f - \T\f}{\Dist} \approx 1$,
then the residual $\inf_{\g \in \fClass}\norm{\g - \T\f}{\Dist}$ will also be large.
It follows that the inherent Bellman error will be large as well,
and so will the prediction error when estimated via \cref{eqn:OffPolicyPredictionIBE}:
\begin{align}
\label{eqn:IntroExIBE}
	\abs{ \Vpi{\policy} - \Vhat^\policy}  \gtrapprox \GlobalIBE \approx 1.
\end{align}  
In other words, the bound \ref{eqn:OffPolicyPredictionIBE} becomes vacuous.
However, if the situation depicted in \cref{fig:45} is representative 
of the mutual alignment between $\T\f$ and $\fClass$ across various $\f \in \fClass$, 
then in lieu of a large approximation error, 
our analysis predicts only a slowdown 
of a factor of $\frac{1}{1-\BeCom} \approx 3$ compared to the Bellman complete case:
\begin{align}
	\label{eqn:IntroExAlCom}
	\abs{\Vpi{\policy} - \Vhat^\policy}  \lessapprox 
	\underbrace{3}_{ \frac{1}{1-\BeCom} } \times 
	\frac{1}{1-\discount}\sqrt{\frac{\Concentrability \ln (\card{\fClass}/\FailureProbability) }{\nSamples}}.
\end{align}
For such problems, the bound in display in \cref{eqn:IntroExAlCom}  
is a major improvement compared to the one in \cref{eqn:IntroExIBE}. 
While the analyses that lead to \cref{eqn:IntroExIBE} suggest that accurate predictions 
are out of reach due to large inherent Bellman errors,
the refined one of this paper expects a 
minor slowdown in the rate of convergence compared to the Bellman complete case.

It is only when the Bellman backup becomes almost orthogonal to $\fClass$
that $\BeCom$ approaches one and prediction becomes very challenging;
such is the situation depicted in \cref{fig:85} and in force in some recent lower bounds
(e.g., \cite{foster2021offline}).
See \cref{fig:Learnability} for a graphical summary.
More precisely, the condition $\BeCom = 1$ corresponds to the existence of multiple 
projected fixed points.
Any method based on finding projected fixed points to the Bellman equations necessarily 
fails to converge to the correct predictor on such problems,
because the correct predictor is only one of the many possible solutions to the projected Bellman equations.

When $\BeCom$ is close to one, 
the classical bound in
\cref{eqn:OffPolicyPredictionIBE}
can be tighter than 
the new bound in \cref{eqn:MinimaxErrorBoundFC}.
Of course, one can always select the tighter of the two.
Likewise, it is possible to leverage the more general notion of 
local inherent Bellman error instead of that of $\BeCom$-incompleteness
and achieve tighter error guarantees than the ones that we present,
but doing so would have only been possible at the expense of the clarity of exposition.
Instead, the key contribution of this work
is to interpret the inherent Bellman error 
no longer as an unavoidable approximation error
that must be zero for the approximation error to be zero,
but as a quantity that naturally decreases when more samples are added.
More precisely, if $\BeCom < 1$, as the number of samples
$\nSamples$ increases, the bound in 
\cref{eqn:MinimaxErrorBoundFC}
eventually becomes tighter than that in
\cref{eqn:OffPolicyPredictionIBE},
establishing convergence to the optimal predictor
even when the inherent Bellman error is non-zero.
See also \cref{sec:FurtherComments}.

\subsection{Further comparison with existing literature}
\label{sec:Literature}
One work close to ours is \cite{xie2020batch},
which operates with stronger concentrability requirements. 
Another one is the non-linear Bubnov-Galerkin method \cite{zanette2022Bellman},
for which we may expect similar considerations to apply;
however, the violation of completeness is not quantified 
in an interpretable way in that work.

Our result is due to a refined analysis, as well as to an appropriate definition, 
and not to a new algorithm.
The minimax formulation has been analyzed multiple times,
\cite{antos2008learning,chen2019information,xie2021bellman,jin2021bellman,duan2021risk,xie2022role}
but to our knowledge all analyses use the inherent Bellman errors.
Although our minimax formulation is for policy evaluation,
as the proof will clarify, the same argument applies to policy optimization
(i.e., when $\T$ is the Bellman optimality operator). 
Finally, our work removes the binary distinction between Bellman completeness 
and the lower bound of \cite{foster2021offline}.

\paragraph{Additional literature}
The off-policy prediction task has been widely studied.
Earlier methods where based on temporal difference (TD) \cite{sutton1988learning};
they include $Q$-learning \cite{watkins1992q} 
and fitted Q iteration \cite{ernst2005tree,munos2008finite}.
These TD methods are key to the recent successes of RL
\cite{tesauro1995temporal,mnih2013playing,mnih2015human,mnih2016asynchronous,fujimoto2018addressing}.

A more robust TD variant which is however harder to optimize numerically
is the minimax formulation that we investigate here \cite{antos2008learning};
its relation with TD methods has been investigated by \cite{chen2019information}.
The minimax formulations has been adopted for provably efficient exploration \cite{jin2021bellman}
and offline robust optimization \cite{xie2021bellman}.
More recently, the minimax formulation has been used as a proxy to analyze theoretically 
an empirical algorithm based on TD \cite{cheng2022adversarially}.
An analysis based on local Rademacher averages is given in \cite{duan2021risk}.
All these analyses require Bellman completeness, 
or otherwise the inherent Bellman error must be suffered.

Many other algorithms for the off-policy prediction problems have been proposed.
These include importance sampling methods 
\cite{precup2000eligibility,thomas2016data,jiang2016doubly,liu2018breaking,farajtabar2018more},
which do not require completeness but can only tolerate small distribution shifts.

More recent literature has proposed weight-learning methods
which rely on the knowledge of certain weights, 
typically the marginalized importance ratios between the distribution that collected the data
and the target policy
\cite{liu2018breaking,xie2020Q,zhan2022offline,nachum2019algaedice,xie2019towards,zhang2020gendice,zhang2020gradientdice,yang2020off,kallus2019efficiently,jiang2020minimax,uehara2020minimax,zanette2022Bellman,rashidinejad2022optimal}.
While these algorithms can avoid Bellman completeness,
they rely on additional assumptions, such as realizability of the weight class,
and more generally they leverage additional domain knowledge 
which is implicit in the choice of the weight class.
For example, \cite{uehara2021finite} 
makes completeness assumptions about the weight class, 
and \cite{zhan2022offline} assume realizability for both the weight and value class.
An additional high-level viewpoint is presented in \cref{app:Notation_Additional}.

Two notable exceptions to completeness are \cite{xie2020batch,zanette2022Bellman};
however \cite{xie2020batch} make very strong assumptions on the concentrability factor,
while the violation of the completeness condition is not quantified in \cite{zanette2022Bellman}.
The violation of completeness is also examined algebraically and algorithmically 
for the linear setting by \cite{perdomo2022sharp}.
For off-policy learning with pessimism and linear methods, completeness was removed 
via a Bubnov-Galerkin approach in \cite{zanette2022Bellman}
while still ensuring computational tractability; in contrast, 
here we focus on more general non-linear predictors.

Fundamental limits were investigated in 
\cite{zanette2020exponential,wang2020statistical,foster2021offline}.
Collectively they show that hard-to-learn structures can arise in absence of Bellman completeness,
or with large distribution shift.
Our paper describes the intermediate situation between 
these lower bounds and the Bellman complete setting.
Related papers include
\cite{duan2020minimax,duan2021risk,tang2019doubly,nachum2020reinforcement,uehara2021finite,chen2022well,chang2022learning}.

Other papers have implicitly examined settings that are intermediate 
between realizability and completeness, such as \cite{wei2022model,ye2022corruption}.
In their setting, if the corruption continues through time then
the regret scales linearly.
Rather, our setting is corruption free, and we can indeed converge to the optimal solution when $\BeCom < 1$.

\section{Conclusion}
In this work we have re-analyzed the statistical complexity of off-policy reinforcement learning
on Bellman-incomplete MDPs using temporal-difference-style algorithms. 
The work establishes that there exists a full spectrum between Bellman completeness
and the existing lower bounds where off-policy reinforcement learning
remains statistically viable, even without additional domain knowledge,
such as weights or test classes, and with no approximation error.
The key advancement is due to a localization argument,
which removes the approximation error associated to the lack of Bellman completeness.

Even though we presented our findings for the policy evaluation problem,
the optimization setting is immediately covered 
by replacing the Bellman evaluation operator with its optimization counterpart;
since our main analysis only relies on the boundedness of the Bellman evaluation operator,
this is a straightforward operation.
We also expect these insights to extend directly 
to the setting of exploration and of pessimistic policy learning.
More generally, we believe that a local analysis 
can be a useful tool to analyze new algorithms or existing ones in other settings as well.
It can help carefully assess how the violation of a certain assumption
affects the performance of an algorithm,
so as to relax some structural assumptions in a way that does not introduce an approximation error.

Finally, although our paper exhibits an algorithm to 
find high-quality solutions in absence of Bellman completeness,
there is no guarantee that such points can be found in a computationally efficient way.
For example, TD methods do not always converge, although when they do, they inherit such bounds.
That raises an interesting question, one that concerns possible statistical-computational
trade-offs to be made in reinforcement learning.

\section*{Acknowledgments}
The author is grateful to the reviewers for their helpful comments which helped improve the quality of the paper.
The author is supported by a fellowship from the Foundation of Data Science Institute (FODSI).

\label{submission}
\bibliography{rl.bib}
\bibliographystyle{icml2023}

\newpage
\appendix
\onecolumn
\addcontentsline{toc}{section}{Appendix} 
\part{Appendix} 
\parttoc
The appendix is organized as follows:
\begin{itemize}
	\item \cref{sec:FurtherComments} presents further comments, particularly related to the weight methods
	\item \cref{app:Notation_Additional} describes additional notation
	\item \cref{sec:AdditionalResult} describes additional results
	\item \cref{sec:MainAnalysis} presents the main proof of the paper
	\item \cref{sec:ProofOfMinimax} presents the main technical sub-component of the paper, which is the rate of the minimax program
	\item \cref{sec:TechRes} presents some technical results needed in the prior sections
\end{itemize}

\section{Further Comments on the Relation between TD and Weight Methods}
\label{sec:FurtherComments}
There is a solid high-level connection between TD and weight methods,
which we discuss in this section.

If one had access to a generative model, the mean-squared Bellman error can be minimized to find a good predictor.
However, without a generative model, it is not possible to directly estimate 
(and thus minimize) the mean square Bellman error when function approximation is implemented. 
In this case, the `standard' approach 
(e.g., temporal difference learning, fitted Q, but also the minimax formulation that we examine here) is to roughly minimize the projected Bellman error.
 To be more precise, the Bellman error is projected onto $\fClass$. 
 Of course, the projection may discard important components of the Bellman error 
 (those orthogonal to $\fClass$), and so there is a loss in sample efficiency, 
 which our work quantifies with the scalar $\BeCom$. 
 When prior art assumed Bellman completeness, they assumed that there are no orthogonal components.

One might wonder whether it makes sense to `project' the Bellman error along different spaces 
(i.e., a space $\mathcal V$ different from $\fClass$). 
This idea roughly leads to the class of weight methods, although they are normally not presented as methods doing projections;
see the paper \cite{zanette2022Bellman} for one such viewpoint.

Which one (TD or weight learning) is better? The answer is problem dependent. 
At a very basic level, if $\fClass$ is well aligned with the Bellman error, 
TD-style methods are superior. If one has specific knowledge of a subspace $\mathcal V$
that better captures the Bellman error, then a weight learning method can be used. 
A special case of this is, for instance, when $\mathcal V$
contains the density ratio of the target policy with respect to the behavioral policy.

While weight learning methods are conceptually appealing, 
it is rare to have such domain knowledge to exploit with a weight learning method, 
and so TD-style methods (broadly those that we analyze here) remain very popular.

\section{Additional Notation}
\label{app:Notation_Additional}
\paragraph{TD and Bellman errors}
For a given $\Q$-function and
policy $\policy$, let us define the \emph{temporal difference error}
(or TD error) associated to the sample $\sars{}$ and
the \emph{Bellman error} at $\psa$
\begin{align}
\label{eqn:TDandBEerr}
(\TD\f)\sars{} \defeq \f\psa - \reward -
  \discount \f(\successorstate,\policy), \qquad
  (\BE\f)\psa \defeq \f\psa - \reward\psa
  - \discount
  \E_{\successorstate\sim\TransitionLaw\psa}\f(\successorstate,\policy).
\end{align}
The TD error is a random variable function of $\sars{}$, while the
Bellman error is its conditional expectation with respect to the
immediate reward and successor state at $\psa$. 

\paragraph{Function class}
We deal with a function class $\fClass$ that contains a set of predictors $\f$
defined over the state and action space.
They are bounded in supremum norm, i.e., $\sup_{\psa} \abs{\f\psa}\leq 1$,
a bound that must apply to $\fstar$ as well since we assume realizability.

Some of our results are presented using a statistical complexity notion 
called Rademacher complexity.
The Rademacher complexity of a function class measures 
the expected worst-case alignment of a predictor $\f \in \fClass$,
evaluated in a $\nSamples$-dimensional space over the random covariates 
$(\SState_\iSample, \AAction_\iSample)\sim\Dist$, with the Rademacher noise $\RadVar_\iSample$,
which takes value $-1$ and $+1$ with equal probability.
It is defined for a function class $\fClass$ as
\begin{align*}
	\Rademacher_\nSamples \big[\fClass\big]
	& =
	\E \sup_{\f \in \fClass } 
	\bigabs{
	\frac{1}{\nSamples} \sum_{\iSample = 1}^{\nSamples}
	\RadVar_\iSample\f(\SState_\iSample,\AAction_\iSample)
	}.
\end{align*}

When presenting our results for general function approximation,
we use a set that contains functions that are at most $\radius \geq 0$ away from the optimal one.
It is defined as
\begin{align}
\label{eqn:LocalizedSet}
	(\fClass - \fstar)(\radius) = \{ \f - \fstar \mid \norm{\f - \fstar}{\Dist} \leq \radius
	, \; \f \in \fClass \}.
\end{align}

\newpage
\section{Additional Results}
\label{sec:AdditionalResult}

\subsection{Proof of \cref{prop:iBehav}}
\label{app:iBehav}
\begin{proof}
Let us focus on the first statement and fix two radii $\radius \leq \radius'$
where $\Incompleteness$ exists.
The supremum $\sup_{\f}$ for $\Incompleteness(\radius)$ is over $\fClass(\radius)$
while for $\Incompleteness(\radius')$ it is over $\fClass(\radius')$;
in both cases, the infimum $\inf_{\g}$ is over the original class $\fClass$.
Since
$\fClass(\radius) \subseteq \fClass(\radius')$,
taken together these observations imply
$$
\Incompleteness(\radius) 
\defeq
\sup_{\f \in \fClass(\radius)}
\inf_{\g \in \fClass}
\norm{\g - \T\f}{\Dist}
\leq
\sup_{\f \in \fClass(\radius')}
\inf_{\g \in \fClass}
\norm{\g - \T\f}{\Dist}
\defeq
\Incompleteness(\radius').
$$
Now, for the second statement: when realizability holds,
the set $\fClass(0) = \{ \f \in \fClass \mid \norm{\f - \T\f}{\Dist} \leq 0 \}$ 
contains at least $\fstar$, and it is hence non-empty. 
The fact that $\Incompleteness(0) = 0$ for a realizable problem then follows from 
$$
\Incompleteness(0) 
\defeq \sup_{\f \in \fClass(0) } \inf_{\g \in \fClass} 
\norm{\g - \T\f}{\Dist}
\leq \sup_{\f \in \fClass(0) }  
\norm{\f - \T\f}{\Dist}
\leq 0.
$$
\end{proof}

\subsection{Off-policy cost coefficient}
\label{sec:OPC}
The error bound in \cref{eqn:MinimaxErrorBoundFC}
can be re-written in a more suggestive way:
\begin{align*}
	\abs{\PE(\fhat)} \leq \frac{1}{1-\discount} 
	\sqrt{\frac{\Concentrability}{1-\BeCom}} \criticalradius,
	\qquad \text{where} \quad
	\criticalradius^2 = \crfc.
\end{align*}
The above regroupment has highlighted the dependence on three key factors.
The first is the \emph{rate of convergence} $\criticalradius$ to zero
of the population-level minimax program $\Minimax$
(as the proof will clarify, we have $\Minimax(\fhat) \lesssim \criticalradius^2$
with high probability).
The other two factors are the concentrability coefficient $\Concentrability$ 
and the lack of Bellman completeness $\frac{1}{1-\BeCom}$.
They relate how minimizing $\Minimax$---represented by $\criticalradius$---affects
the prediction error $\PE(\fhat)$.

A natural question to ask is whether it makes sense to have two factors, rather than a single entity,
to relate the value of the program $\Minimax(\fhat)$ and the prediction error $\PE(\fhat)$.
In fact, it is possible to adopt a more direct approach
and directly measure how minimizing $\Minimax(\f)$ affects the prediction error $\PE(\fhat)$,
and denote the worst-case ratio by $\OPC$:
\begin{align}
	\label{eqn:OPC}
	\OPC \defeq \sup_{\f \in \fClass} \frac{\PE(\f)^2}{\Minimax(\f)}
	\approx
	\frac{\text{quantity of interest}}{\text{quantity being minimized}}.
\end{align}
The off-policy cost coefficient $\OPC$ so defined
always leads to tighter bounds:
it is always smaller than the product between $\Concentrability$
and the incompleteness factor $\frac{1}{1-\BeCom}$
that appears in \cref{thm:MinimaxErrorBoundFC}: 
\begin{align*}
	\OPC 
	\leq 
	\frac{1}{( 1-\discount )^2}
	\frac{\Concentrability}{1-\BeCom}.
\end{align*}
In fact, the proof of \cref{thm:MinimaxErrorBoundFC,thm:MinimaxErrorBoundGeneral}
computes the performance bound of the minimax algorithm using $\OPC$,
only to relax it at the end by using the above display to make the result more interpretable;
one can thus directly replace 
$\frac{1}{(1-\discount)^2} \frac{\Concentrability}{1-\BeCom}$ 
in \cref{eqn:MinimaxErrorBoundFC} and \cref{eqn:MinimaxErrorBoundGeneral} to follow 
with $\OPC$.

Although $\OPC$ is less interpretable in terms of fundamental reinforcement learning quantities, 
its use should be preferred for two reasons.
The first is that it is smaller, i.e., $\OPC$ can be small 
even when $\frac{\Concentrability}{1-\BeCom}$ is large.
The second is that it reflects more truthfully the learning mechanics of the algorithm:
$\OPC$ directly bounds the ratio between 
the quantity of interest---the prediction error $\abs{\PE(\f)}$---and
the one being controlled---the value of the minimax program $\Minimax(\f)$---and 
it is thus the `correct' way to quantify the cost of off-policy learning with the minimax procedure.

\subsection{Error bounds with more general function approximation} 
\label{sec:EBGeneral}
In practice, TD methods are implemented as gradient-based algorithms,
using differentiable approximators that are far more complex
then finite classes and that may operate in a non-parametric regime, such as neural networks.
In such cases, we do not expect a $\sqrt{\nSamples}$ rate of convergence.
In order to provide error bounds that apply to the latter setting,
in this section we express the result using Rademacher averages,
which are standard ways to quantify the capacity of a function class.

As with the localized inherent Bellman error in \cref{sec:Localization},
the relevant sets to determine the statistical complexity---and hence the rate of convergence---are 
subsets of $\fClass$ where we expect the predictor $\fhat$ to be. 
We expect these sets (and their Rademacher complexity) to become smaller as $\nSamples$ increases,
much like the incompleteness function. 

What determines the rate of convergence $\criticalradius$ then is a certain relation
presented in \cref{eqn:RateOfConvergenceGeneral}.
It involves the Rademacher complexity of these localized sets,
which is a standard way to express the rates of convergences with generic function classes
\cite{bartlett2005local,wainwright2019high}.
The conditions in  \cref{eqn:RateOfConvergenceGeneral} 
must admit a solution $\criticalradius$
such that the requirement holds for all $\radius \geq \criticalradius$;
this requirement is met by the bounded classes we consider.
We further assume that there are no measurability issues when stating and proving the following theorem;
in particular we assume that the prerequisites for using Talagrand are met in order to avoid measurability issues.
\begin{theorem}[Error Bounds with General Function Approximation]
\label{thm:MinimaxErrorBoundGeneral}
	With probability at least $1-\FailureProbability$,
	the prediction error of the minimizer $\fhat$ satisfies the bound
	\begin{align}
	\label{eqn:MinimaxErrorBoundGeneral}
		\abs{\PE(\fhat)}
		& \leq 
		\frac{\criticalradius}{1-\discount}\sqrt{\frac{\Concentrability}{1-\BeCom}}
	\end{align} 
	where the rate of convergence $\criticalradius$ is such that all $\radius \geq \criticalradius$
	satisfy the inequalities
	\begin{subequations}
	\label{eqn:RateOfConvergenceGeneral}
	\begin{align}
	\Rademacher_\nSamples 
	\Big\{
	\Cost(\f,\f) - \Cost(\gbest{\f},\f ) 
	\mid 
	\E [\Cost(\f,\f) - \Cost(\gbest{\f},\f)] \leq 2\radius^2
	\Big\}
	& \leq c_1 \radius^2, \\
		\Rademacher_\nSamples
		\Big[(\fClass - \fstar)\big(\factor \radius \big)\Big]
		& \leq c_2 
		\radius^2, \\ 
		(\factor + 1) \frac{\ln(1/( \FailureProbability\radius ))}{\nSamples} 
	& \leq c_3 \radius^2.
	\end{align}
	\end{subequations}
	for three universal constants $c_1, c_2, c_3 > 0$,
	and $\factor = 1$ if $\fClass$ is convex or $\factor = \frac{1}{1-\BeCom}$
	if $\fClass$ is non-convex. 
\end{theorem}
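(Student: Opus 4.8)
The plan is to reduce the prediction error to the \emph{population} minimax value $\Minimax(\fhat)$ and then control the latter by a localization argument in the spirit of \cite{bartlett2005local}. For the reduction I would use the off-policy cost coefficient of \cref{eqn:OPC}: telescoping the Bellman residual through the occupancy measure gives $\abs{\PE(\f)} \leq \frac{1}{1-\discount}\norm{\f - \T\f}{\policy}$, concentrability gives $\norm{\f-\T\f}{\policy}^2 \leq \Concentrability\norm{\f-\T\f}{\Dist}^2$, and \cref{def:BeCom} gives $\Minimax(\f) \geq (1-\BeCom^2)\norm{\f-\T\f}{\Dist}^2$, so that $\OPC \leq \frac{\Concentrability}{(1-\discount)^2(1-\BeCom)}$. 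It then suffices to show $\Minimax(\fhat) \lesssim \criticalradius^2$ with probability $1-\FailureProbability$, since $\abs{\PE(\fhat)} \leq \sqrt{\OPC\cdot\Minimax(\fhat)}$ reproduces \cref{eqn:MinimaxErrorBoundGeneral}.

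For the core estimate set $\ell_\f \defeq \Cost(\f,\f) - \Cost(\gbest{\f},\f)$, so that $\P\ell_\f = \Minimax(\f)$ by \cref{eqn:ExpectedCostModified}. Two structural facts drive the proof. First, because $\gEmp{\f}$ is the empirical minimizer of $\Pn\Cost(\cdot,\f)$, we have $\Pn\ell_\f \leq \MinimaxEmp(\f)$ for every $\f$. Second, optimality of $\fhat$ gives the basic inequality $\MinimaxEmp(\fhat) \leq \MinimaxEmp(\fstar)$. The loss differences $\{\ell_\f\}$ are uniformly bounded and self-bounding (their second moment is controlled by $\norm{\f-\gbest{\f}}{\Dist}^2$, discussed below), so the first inequality of \cref{eqn:RateOfConvergenceGeneral} together with a Talagrand/Bernstein bound — the $\ln(1/(\FailureProbability\radius))$ term of the third inequality accounting for a union bound over a geometric grid of radii in the peeling device — yields a one-sided fast rate $\P\ell_\f \leq 2\Pn\ell_\f + c\criticalradius^2$ uniformly over $\f \in \fClass$. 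Chaining everything gives $\Minimax(\fhat) = \P\ell_{\fhat} \leq 2\Pn\ell_{\fhat} + c\criticalradius^2 \leq 2\MinimaxEmp(\fhat) + c\criticalradius^2 \leq 2\MinimaxEmp(\fstar) + c\criticalradius^2$, reducing the problem to a bound on $\MinimaxEmp(\fstar)$.

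The term $\MinimaxEmp(\fstar)$ measures how much the empirical projection overfits at the optimum, where the population projection is exact ($\gbest{\fstar}=\fstar$). Using $\fstar = \T\fstar$ and the pointwise identity $\Cost(\fstar,\fstar) - \Cost(\g,\fstar) = 2(\TD\fstar)\,d - d^2$ with $d = \fstar - \g$, I would rewrite $\MinimaxEmp(\fstar) = \max_{d}\Pn[2(\TD\fstar)\,d - d^2]$, where $d$ ranges over $\fClass - \fstar$ (up to sign, irrelevant for the Rademacher complexity). Since $\TD\fstar$ is conditionally mean-zero and bounded, symmetrization bounds the empirical process $\Pn[2(\TD\fstar)\,d]$ on the shell $\{\norm{d}{\Dist}\leq t\}$ by $\Rademacher_\nSamples[(\fClass-\fstar)(t)]$, while the self-bounding penalty $-\Pn[d^2] \approx -\norm{d}{\Dist}^2$ dominates once $t$ exceeds the fixed point. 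Peeling over $t$ and invoking the second inequality of \cref{eqn:RateOfConvergenceGeneral} at radius $\factor\radius$ (with the third controlling the concentration of the squares) gives $\MinimaxEmp(\fstar) \lesssim \criticalradius^2$ and closes the argument.

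The delicate point — and the origin of the distinction between $\factor = 1$ and $\factor = \frac{1}{1-\BeCom}$ — is the self-bounding relation between $\norm{\f-\gbest{\f}}{\Dist}^2$ and $\Minimax(\f)$ on which both the Bernstein condition and the peeling radius rest. When $\fClass$ is convex, the variational inequality for the projection of $\T\f$ gives $\langle \f-\gbest{\f},\, \gbest{\f}-\T\f\rangle_{\Dist} \geq 0$, whence $\Minimax(\f) = \norm{\f-\gbest{\f}}{\Dist}^2 + 2\langle \f-\gbest{\f},\, \gbest{\f}-\T\f\rangle_{\Dist} \geq \norm{\f-\gbest{\f}}{\Dist}^2$, so $\factor = 1$. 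Without convexity this inequality is unavailable; instead $\norm{\f-\gbest{\f}}{\Dist} \leq (1+\BeCom)\norm{\f-\T\f}{\Dist}$ by the triangle inequality and $\norm{\f-\T\f}{\Dist}^2 \leq \frac{1}{1-\BeCom^2}\Minimax(\f)$ by \cref{def:BeCom} give $\norm{\f-\gbest{\f}}{\Dist}^2 \lesssim \frac{1}{1-\BeCom}\Minimax(\f)$, so the localization scale inflates by $\factor = \frac{1}{1-\BeCom}$ and forces the larger radius and the $(\factor+1)$ overhead in \cref{eqn:RateOfConvergenceGeneral}. I expect the \textbf{main obstacle} to be executing the peeling self-consistently: the loss-class localization $\{\P\ell_\f \leq 2\radius^2\}$ in the first inequality must be reconciled with the function-class localization $(\fClass-\fstar)(\factor\radius)$ used for $\MinimaxEmp(\fstar)$, and the empirical projection must be prevented from overfitting beyond scale $\factor\criticalradius$ — exactly the step where the lack of Bellman completeness is absorbed into the factor $\frac{1}{1-\BeCom}$ rather than surfacing as an additive inherent Bellman error.
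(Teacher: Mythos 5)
Your proposal is correct and follows essentially the same route as the paper: the identical $\OPC$-factorization (weak simulation lemma, concentrability, and \cref{lem:BeInc}), the same empirical processes ($\ell_\f$ is the paper's $\Xvar(\f)$, and your identity $\Cost(\fstar,\fstar)-\Cost(\g,\fstar)=2(\TD\fstar)d-d^2$ is exactly the paper's $\Yvar(\g)$), the same basic-inequality reduction to $\MinimaxEmp(\fstar)$, and the same peeling-plus-Talagrand machinery with the same variance bounds distinguishing $\factor=1$ (convex, Pythagoras/variational inequality) from $\factor=\frac{1}{1-\BeCom}$ (triangle inequality plus \cref{lem:BeInc}).
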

The rate of convergence $\criticalradius$ is that of the minimax procedure,
i.e., we have $\Minimax(\fhat) \lesssim \criticalradius^2$ with high probability.
Let us add that convexity always leads to improved bounds.
The second and third critical inequalities in \cref{eqn:RateOfConvergenceGeneral}
are standard, while the first involves the Bellman operator, and can be relaxed 
only with additional assumptions \cite{duan2021risk}.

In all cases, in order to determine the rate of convergence $\criticalradius$, 
the first step is to compute the local Rademacher averages
in \cref{eqn:RateOfConvergenceGeneral} 
as a function of $\radius$,
and the second step is to solve for $\radius$ the resulting relation,
finding $\criticalradius$.
It is enough to compute an upper bound to the local Rademacher complexity.
Likewise, it is sufficient to identify any value $\criticalradius$
that solves the resulting relation,
but the smaller the $\criticalradius$, the better the rate of convergence
that we can guarantee.
Of course, in order to obtain concrete and interpretable bounds,
one must consider specific function classes,
see the book \cite{wainwright2019high} for several 
parametric as well as non-parametric examples.

Finally, let us mention that 
the bound that we present here uses the coefficient $\BeCom$
which represents the average behavior of $\Incompleteness$,
but intuitively, it is the actual shape of the incompleteness function $\Incompleteness$
around the origin that determines the problem complexity.
It is possible to obtain critical relations involving the incompleteness function,
much like those in \cref{eqn:RateOfConvergenceGeneral}.
However, implementing this observation would have made the analysis less clear
and the final result less interpretable, and so we leave that for future studies. 

\newpage
\section{Main Analysis}
\label{sec:MainAnalysis}
In this section we prove \cref{thm:MinimaxErrorBoundFC,thm:MinimaxErrorBoundGeneral}.

\paragraph{Proof techniques}
Although the minimax formulation has been analyzed previously in a number of works 
(see e.g., \cite{chen2019information} for a relatively recent analysis),
our proof differs from what is available in the literature from the very set-up,
as the concept of local Bellman errors arises quite soon in the proof in \cref{sec:MainAnalysis}.
In addition, there is substantial technical novelty in the way we bound the minimax program in \cref{sec:ProofOfMinimax},
where the statistical localization, as well as the definition 
of $\BeCom$, are leveraged explicitly. 

\paragraph{Setting up the proof}
In order to prove the theorems, we need to establish a high probability bound
on the estimation error, which is the value function difference at the initial state $\state_0$,
i.e., the quantity $\abs{\PE(\fhat)} = \abs{(\fstar - \fhat)(\state_0,\policy)}$.

The proof is based on the following key observation: 
since $\fhat$ minimizes the empirical loss $\MinimaxEmp$, 
we expect that we can bound its population value $\Minimax(\fhat)$.
Following the suggestion outlined in \cref{sec:OPC},
we factorize the squared prediction error as
\begin{align*}
	\PE(\fhat)^2
	= \frac{\PE(\fhat)^2}{\Minimax(\fhat)} \times \Minimax(\fhat)
	\leq \OPC \times \Minimax(\fhat).
\end{align*}
The off-policy cost coefficient $\OPC$ is defined in \cref{eqn:OPC},
and connects the prediction error to the population-based value of the minimax program.
In order to complete the proof, we need to bound $\OPC$ and $\Minimax(\fhat)$.
\paragraph{Bounding $\OPC$}
A variation of the simulation lemma \cite{kakade2003sample} allows us 
to upper bound the numerator in $\OPC$; it is proved in \cref{sec:WeakSimLem}.
\begin{lemma}[Weak Simulation Lemma]
	\label{lem:WeakSimLem}
	For any $\f \in \fClass$ we have the bound
	\begin{align*}
		\abs{\PE(\f)}  
		\leq 
		\frac{1}{1-\discount}  \norm{\f - \T\f}{\policy}.
	\end{align*} 
\end{lemma}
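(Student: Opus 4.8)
The plan is to derive an exact expression for the prediction error $\PE(\f)$ as a discounted sum of Bellman errors along trajectories generated by $\policy$, and then to control this sum through the occupancy measure $\dpi{\policy}$. The starting point is the fixed-point identity $\T\fstar = \fstar$, which is just the Bellman equation for $\fstar$ and follows directly from its definition as a discounted sum of rewards. Using it I would write, for every pair $\psa$,
\[
(\fstar - \f)\psa = (\T\fstar - \T\f)\psa - (\f - \T\f)\psa .
\]
Since the reward terms cancel inside $\T\fstar - \T\f$, the first term equals $\discount\,\E_{\SState^+ \sim \TransitionLaw(\cdot \mid \state,\action)}(\fstar - \f)(\SState^+,\policy)$, which produces the one-step recursion
\[
(\fstar - \f)\psa = \discount\,\E_{\SState^+ \sim \TransitionLaw(\cdot \mid \state,\action)}(\fstar - \f)(\SState^+,\policy) - (\f - \T\f)\psa .
\]

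Next I would evaluate this identity at $(\state_0,\policy)$ and unroll it along trajectories drawn from $\policy$ with $\SState_0 = \state_0$, $\AAction_\hstep \sim \policy(\cdot \mid \SState_\hstep)$, and $\SState_{\hstep+1} \sim \TransitionLaw(\cdot \mid \SState_\hstep,\AAction_\hstep)$. Iterating $N$ times and collecting terms gives
\[
\PE(\f) = -\sum_{\hstep=0}^{N-1} \discount^\hstep\, \E\big[(\f - \T\f)(\SState_\hstep,\AAction_\hstep)\big] + \discount^N\, \E\big[(\fstar - \f)(\SState_N,\policy)\big].
\]
The only technical point here is that the remainder term vanishes as $N \to \infty$: because $\discount < 1$ and $\fstar - \f$ is uniformly bounded, the remainder tends to $0$ geometrically. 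This yields the exact telescoping identity $\PE(\f) = -\sum_{\hstep=0}^{\infty}\discount^\hstep \E[(\f - \T\f)(\SState_\hstep,\AAction_\hstep)]$.

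Finally, I would recognize the discounted sum as an expectation under the occupancy measure. Substituting $\dpi{\policy}\psa = (1-\discount)\sum_{\hstep=0}^{\infty}\discount^\hstep \Pro_\hstep[(\SState_\hstep,\AAction_\hstep) = \psa]$ turns the identity into $\PE(\f) = -\frac{1}{1-\discount}\E_{\psa \sim \dpi{\policy}}[(\f - \T\f)\psa]$, after which Jensen's inequality followed by Cauchy--Schwarz gives
\[
\abs{\PE(\f)} \leq \frac{1}{1-\discount}\,\E_{\psa \sim \dpi{\policy}}\abs{(\f - \T\f)\psa} \leq \frac{1}{1-\discount}\sqrt{\E_{\psa \sim \dpi{\policy}}[(\f - \T\f)\psa]^2} = \frac{1}{1-\discount}\norm{\f - \T\f}{\policy}.
\]
The main obstacle is bookkeeping rather than depth: one must track the nested conditional expectations over successor states and actions carefully so that the discounted sum aligns exactly with $\dpi{\policy}$, and justify interchanging the infinite sum with the expectation. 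Everything else is routine once the fixed-point identity $\T\fstar = \fstar$ is in place.
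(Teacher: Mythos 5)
Your proof is correct and takes essentially the same route as the paper: the paper simply invokes the simulation lemma of \cite{kakade2003sample} to get the identity relating $\PE(\f)$ to $\frac{1}{1-\discount}\E_{\psa \sim \dpi{\policy}}[(\f - \T\f)\psa]$ and then applies Jensen's inequality, exactly as you do in your final step. The only difference is that you derive that identity from scratch via the fixed-point relation $\T\fstar = \fstar$ and a telescoping/unrolling argument (and, incidentally, you track the sign more carefully than the paper's displayed equality, which is only correct up to the absolute value in which it is wrapped), so your write-up is a strictly more self-contained version of the same argument.
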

In addition, we can lower bound the denominator in $\OPC$ with simple algebra.
\begin{lemma}[Effect of $\BeCom$-incompleteness]
\label{lem:BeInc}
For any $\f \in \fClass$ we have the bound
	\begin{align*}
		\norm{\f - \T\f}{\Dist}^2 \leq \frac{1}{1-\BeCom}\Minimax(\f).
\end{align*}
\end{lemma}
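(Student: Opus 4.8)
The plan is to unfold the definition of the population minimax loss $\Minimax(\f)$ from \cref{eqn:minimax} and bound the subtracted projection term as a fraction of the mean-squared Bellman error, using the definition of $\BeCom$-incompleteness directly. Recall that
$$
\Minimax(\f) = \norm{\f - \T\f}{\Dist}^2 - \min_{\g\in\fClass}\norm{\g - \T\f}{\Dist}^2,
$$
so the claim is equivalent to showing that the subtracted term $\min_{\g\in\fClass}\norm{\g - \T\f}{\Dist}^2$ cannot be too large relative to $\norm{\f - \T\f}{\Dist}^2$.

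First I would invoke \cref{def:BeCom}. Since $\BeCom$ is defined as the supremum over $\f \in \fClass$ of the ratio $\inf_{\g\in\fClass}\norm{\g - \T\f}{\Dist} / \norm{\f - \T\f}{\Dist}$, every fixed $\f \in \fClass$ satisfies
$$
\inf_{\g\in\fClass}\norm{\g - \T\f}{\Dist} \leq \BeCom \, \norm{\f - \T\f}{\Dist}.
$$
Both sides are nonnegative, so squaring preserves the inequality and gives $\min_{\g\in\fClass}\norm{\g - \T\f}{\Dist}^2 \leq \BeCom^2 \norm{\f - \T\f}{\Dist}^2$. Substituting this into the definition of $\Minimax(\f)$ yields
$$
\Minimax(\f) \geq \norm{\f - \T\f}{\Dist}^2 - \BeCom^2\norm{\f - \T\f}{\Dist}^2 = (1 - \BeCom^2)\norm{\f - \T\f}{\Dist}^2,
$$
which rearranges to the slightly tighter estimate $\norm{\f - \T\f}{\Dist}^2 \leq \frac{1}{1-\BeCom^2}\Minimax(\f)$.

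To recover the stated form, I would finish by using $\BeCom \in [0,1]$: since $1-\BeCom^2 = (1-\BeCom)(1+\BeCom) \geq 1-\BeCom$, we have $\frac{1}{1-\BeCom^2} \leq \frac{1}{1-\BeCom}$, and the claim follows. There is no genuine obstacle here—the whole argument is a one-line consequence of \cref{def:BeCom} followed by squaring and rearranging. The only point requiring a moment of care is that the final relaxation from $\frac{1}{1-\BeCom^2}$ to $\frac{1}{1-\BeCom}$ is legitimate precisely because $\BeCom \leq 1$, which holds since $\g = \f$ is always feasible in the infimum of \cref{def:BeCom}. This also makes transparent that both the estimate and the lemma degenerate exactly as $\BeCom \to 1$, consistent with the paper's discussion that off-policy learning becomes unviable in that limit.
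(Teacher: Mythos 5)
Your proof is correct and follows essentially the same route as the paper: both expand $\Minimax(\f)$, bound the projection term by $\BeCom^2\norm{\f-\T\f}{\Dist}^2$ via \cref{def:BeCom}, and then relax $1-\BeCom^2 = (1-\BeCom)(1+\BeCom) \geq 1-\BeCom$ using $\BeCom \in [0,1]$. Your intermediate, slightly tighter constant $\frac{1}{1-\BeCom^2}$ appears implicitly in the paper's chain of inequalities as well, so there is no substantive difference.
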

The above lemma is where the definition of $\BeCom$-incompleteness is leveraged;
however, $\BeCom$-incompleteness also plays a role 
in determining the rate of convergence of the minimax program in
\cref{prop:Minimax}.
After putting together the pieces, we obtain
\begin{align*}
	\OPC 
	& \leq \sup_{\f \in \fClass} \frac{\PE(\f)^2}{\Minimax(\f)} \\
	& \leq \sup_{\f \in \fClass}
	\Big( \frac{1}{1-\discount} \Big)^2 \frac{1}{1-\BeCom}  
	\frac{\norm{\f - \T\f}{\policy}^2}{\norm{\f - \T\f}{\Dist}^2} \\
	& = \Big( \frac{1}{1-\discount} \Big)^2 \frac{1}{1-\BeCom}
	\Concentrability.
\end{align*}

\paragraph{Bounding $\Minimax(\fhat)$}
In order to conclude, we must establish 
a high probability rate of convergence for the population loss 
evaluated at the empirical minimizer $\fhat$.
Such rate of convergence depends on the function class $\fClass$.
More precisely, if the function class $\fClass$ has finite cardinality, 
the rate of convergence is 
$$\criticalradius^2 \simeq \crfc,$$
while for a general function class it must be such that 
any $\radius \geq \criticalradius$ satisfies \cref{eqn:RateOfConvergenceGeneral}.
\begin{proposition}[Rate of Minimax]
\label{prop:Minimax}
	With probability at least $1-\FailureProbability$
	\begin{align*}
		\Minimax(\fhat) \lesssim \criticalradius^2.
	\end{align*}
\end{proposition}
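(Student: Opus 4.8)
The plan is to control $\Minimax(\fhat)$ by comparing the empirical objective at $\fhat$ with its value at $\fstar$, and then transferring the resulting empirical bound to the population level through a localized uniform-convergence argument. The starting point is the algebraic identity, valid for any fixed $\f$ and any competitor $\g$,
\begin{equation*}
\Cost(\f,\f) - \Cost(\g,\f) = \big(\f\psa - \g\psa\big)\big(\f\psa + \g\psa - 2\reward - 2\discount\f(\successorstate,\policy)\big).
\end{equation*}
Taking the conditional expectation over the reward and successor state cancels the irreducible variance $\sigma(\f)^2$ (cf.\ \cref{eqn:ExpectedCostModified}), so that $\P[\Cost(\f,\f) - \Cost(\gbest{\f},\f)] = \Minimax(\f)$; simultaneously, since all functions and rewards are bounded, the identity shows that the conditional variance of this integrand is of order $\norm{\f - \gbest{\f}}{\Dist}^2$. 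This variance-to-mean control is the structural ingredient that makes a localized (fast-rate) analysis possible.

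Next I would exploit the optimality of $\fhat$. Because $\gEmp{\f}$ minimizes the empirical cost over $\g$, it can only beat the population projection $\gbest{\f}$ empirically, giving the one-sided bound $\MinimaxEmp(\f) \ge \Pn[\Cost(\f,\f) - \Cost(\gbest{\f},\f)]$ for every $\f \in \fClass$. Together with $\MinimaxEmp(\fhat) \le \MinimaxEmp(\fstar)$, which holds because realizability (\cref{asm:Realizability}) places $\fstar \in \fClass$, the problem reduces to two estimates: an upper bound $\MinimaxEmp(\fstar) \lesssim \criticalradius^2$, and a one-sided population-from-empirical conversion $\Minimax(\fhat) \lesssim \Pn[\Cost(\fhat,\fhat) - \Cost(\gbest{\fhat},\fhat)] + (\text{fluctuation})$. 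For the first estimate, realizability makes $\fstar = \T\fstar$ the conditional mean of the regression target $\reward + \discount\fstar(\successorstate,\policy)$, so $\MinimaxEmp(\fstar)$ is exactly the empirical over-fitting of least-squares regression onto $\fClass$ with a realizable target; a standard localized least-squares bound over the differences $(\fClass - \fstar)(\cdot)$ controls it and produces the second and third inequalities of \cref{eqn:RateOfConvergenceGeneral}. For finite classes a union bound replaces the Rademacher term and the induced fixed point gives $\criticalradius^2 \simeq \crfc$.

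For the population-from-empirical conversion I would apply a one-sided localized deviation bound (via Talagrand's inequality) to the class $\{\Cost(\f,\f) - \Cost(\gbest{\f},\f) : \f \in \fClass\}$. Using that this integrand has mean $\Minimax(\f)$ and variance proportional to $\norm{\f - \gbest{\f}}{\Dist}^2$, I would localize to the sublevel set $\{\P[\,\cdot\,] \le 2\radius^2\}$ and peel over a geometric grid of radii; this yields precisely the first inequality of \cref{eqn:RateOfConvergenceGeneral}, with the $\ln(1/(\FailureProbability\radius))$ term arising from the union bound across the peeling scales. The relationship between the variance proxy $\norm{\f - \gbest{\f}}{\Dist}^2$ and the mean $\Minimax(\f)$ is where incompleteness enters: when $\fClass$ is convex the projection of $\T\f$ is orthogonal, the Pythagorean identity gives $\norm{\f - \gbest{\f}}{\Dist}^2 \le \Minimax(\f)$, and one may take $\factor = 1$; when $\fClass$ is non-convex the projection is oblique and, invoking \cref{lem:BeInc}, one only obtains $\norm{\f - \gbest{\f}}{\Dist}^2 \lesssim \frac{1}{1-\BeCom}\Minimax(\f)$, which inflates the localization radius of the companion set to $\factor\radius$ with $\factor = \frac{1}{1-\BeCom}$.

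The main obstacle is this last deviation bound. Unlike a textbook regression problem, the reference $\gbest{\f}$ sitting inside the integrand is itself a function of $\f$ through the projection, so the class whose local Rademacher complexity must be bounded genuinely involves the Bellman operator $\T$, and its variance proxy cannot be collapsed to a single distance $\norm{\f - \fstar}{\Dist}$ without paying the incompleteness factor. Establishing the Bernstein-type variance condition uniformly after localization, and verifying that the resulting fixed-point relation is solvable for the bounded classes we consider, is the crux of the argument --- indeed this is the term that, as noted after \cref{eqn:RateOfConvergenceGeneral}, can be relaxed only under additional assumptions. Once this deviation bound and the $\BeCom$-incompleteness relation are in place, assembling the three critical inequalities and solving for $\criticalradius$ is routine.
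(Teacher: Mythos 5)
Your proposal is correct and follows essentially the same route as the paper's own proof: the same two-part decomposition (bounding $\MinimaxEmp(\fhat)\le\MinimaxEmp(\fstar)$ as the overfitting of a realizable least-squares problem, plus a localized one-sided empirical-to-population transfer for the process $\Cost(\f,\f)-\Cost(\gbest{\f},\f)$), the same variance-to-mean control via the difference-of-squares factorization with Pythagoras in the convex case and \cref{lem:BeInc} otherwise, and the same Bernstein/Talagrand-with-peeling machinery yielding the critical inequalities and the finite-class rate $\crfc$.
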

The proof of \cref{prop:Minimax} is in the appendix.
Combined with the bound on $\OPC$, the proof of 
\cref{thm:MinimaxErrorBoundFC,thm:MinimaxErrorBoundGeneral} is complete.

\subsection{Proof of \fullref{lem:WeakSimLem}}
\label{sec:WeakSimLem}
For a fixed function $\f\in\fClass$, 
the simulation lemma (e.g., \cite{kakade2003sample}) ensures
\begin{align*}
	\abs{\PE(\f)} 
	& = \abs{(\fstar - \f)(\state_0,\policy)} \\
	& = 
	\abs{\frac{1}{1-\discount}\E_{\psa \sim \dpi{\policy}} (\f -\T \f)\psa} \\
	& \leq
	\frac{1}{1-\discount} \E_{\psa \sim \dpi{\policy}} \sqrt{[( \f -\T \f )\psa]^2 }
\intertext{Using the Jensen's inequality we obtain the upper bound}
& \leq \frac{1}{1-\discount} 
\sqrt{\E_{\psa \sim \dpi{\policy}} [(\f - \T \f) \psa ]^2} \\
& = \frac{1}{1-\discount} 
\norm{\f -\T \f}{\policy}.
\end{align*}

\subsection{Proof of \fullref{lem:BeInc}}
We can write
\begin{align*}
	\Minimax(\f) 
	& =		
	\norm{\f - \T\f}{\Dist}^2
	-
	\norm{\gbest{\f} - \T\f}{\Dist}^2 \\
	& \geq 
	\norm{\f - \T\f}{\Dist}^2
	-
	\BeCom^2 \norm{\f - \T\f}{\Dist}^2 \\
	& =
	(1- \BeCom^2)\norm{\f - \T\f}{\Dist}^2 \\
	& =
	(1 - \BeCom)(1+\BeCom)\norm{\f - \T\f}{\Dist}^2 \\
	& \geq
	(1 - \BeCom)\norm{\f - \T\f}{\Dist}^2.
\end{align*}
The last inequality follows from the fact that $\BeCom \in [0,1]$.

\newpage
\section{Proof of \fullref{prop:Minimax}}
\label{sec:ProofOfMinimax}
We will show that the population and the empirical loss are related, 
i.e., that
$$
\Minimax(\fhat) \lesssim 2\MinimaxEmp(\fhat)
$$ 
with high probability.
Next, since $\fhat$ minimizes $\MinimaxEmp$, and realizability holds, 
we should have that $\MinimaxEmp(\fhat)$ is small, or more precisely that
with high probability
$$\MinimaxEmp(\fhat) \lesssim \criticalradius^2.$$
Together, they imply the statement.
In order to proceed we need to introduce more notation.

\subsection{Notation, Empirical Processes and Failure Events}
We need to show that the bad event
\begin{align}
	\label{eqn:BadEvent}
	\Minimax(\fhat) \gtrsim 2\criticalradius^2
\end{align}
occurs with probability at most $\FailureProbability$.
Since $\fhat$ is random, we establish 
uniform convergence results, i.e., 
statements that hold for many (possibly all) functions $\f \in \fClass$.
In order to do so,
we need to analyze the statistical fluctuations of the empirical process
associated to the cost function that defines the loss:
\begin{align*}
	\Xvar(\f) \defeq \Cost(\f, \f) - \Cost(\gbest{\f}, \f).
\end{align*}
This is a natural quantity to analyze, 
because its expectation
(which is computed with the help of \cref{lem:ExpectiCost})
is precisely the quantity that we wish to control
\begin{align*} 
	\P \Xvar(\f) 
	& =
	\E_{\psa \sim \Dist}
	\Big[ \E_{\reward \sim \RewardLaw\psa, \successorstate \sim \Transition{}\psa} \Xvar(\f) \Big] \\
	& = 
	\Loss(\f, \f) + \sigma(\f)^2 - \Loss(\gbest{\f}, \f) - \sigma(\f)^2 \\
	& = 
	\Loss(\f, \f) - \Loss(\gbest{\f}, \f) \\
	& =
	\Loss(\f, \f) - \inf_{\g\in\fClass}\Loss(\g, \f) \\
	& =
	\Minimax(\f),
\end{align*}
while its empirical average is upper bounded by the empirical loss that the agent minimizes
\begin{align*}
	\Pn \Xvar (\f)
	& =
	\frac{1}{\nSamples}\SumOverData{} 
	\Xvar(\f) \\
	& =
	\LossEmp(\f,\f) - \LossEmp(\gbest{\f}, \f) \\
	& \leq
	\LossEmp(\f,\f) - \inf_{\g \in \fClass}\LossEmp(\g, \f) \\
	& = \MinimaxEmp(\f).
\end{align*}

\subsubsection{Setting up the Failure Events}
As outlined, we need to establish that it is unlikely that $\MinimaxEmp(\fhat)$ is large
\begin{align}
\label{eqn:FailEvent_1}
\Pro(\FailEvent_1) \leq \FailureProbability/2
\qquad \text{where} \;
\FailEvent_1 \; : \quad
\MinimaxEmp(\fhat)  > \criticalradius^2.
\end{align}
When the failure event $\FailEvent_1$ does not occur,
we have $ \Pn\Xvar(\fhat) \leq \MinimaxEmp(\fhat) \leq \criticalradius^2$.
If we can claim $\Minimax(\fhat) = \P \Xvar(\fhat) \leq 2\Pn\Xvar(\fhat)$ 
then the proof would be complete.
Unfortunately, the latter claim is not true in general.
However, notice that if
$\Minimax(\fhat) = \P \Xvar(\fhat) \leq \criticalradius^2$
then we can already jump to the conclusion.
Therefore, it is sufficient (and more convenient) to show that 
it is unlikely that $\P \Xvar(\fhat)$ is large (i.e., $ > \criticalradius^2$)
and at the same time the deviation is large $\P \Xvar(\fhat) > 2\Pn\Xvar(\fhat)$:
\begin{align}
\label{eqn:FailEvent_2}
\Pro(\FailEvent_2)
\leq \FailureProbability/2
\qquad \text{where} \;
\FailEvent_2 \; : \quad 
\P \Xvar(\fhat) > 2\Pn\Xvar(\fhat)
\quad \text{and} \quad  
\P \Xvar(\fhat) > \criticalradius^2.
\end{align}
To recap:
when neither $\FailEvent_1$ nor $\FailEvent_2$ occur either we have
\begin{align*}
	\Minimax(\fhat) = \P\Xvar(\fhat) \leq \criticalradius^2
\end{align*}
or otherwise we have
\begin{align*}
	\Minimax(\fhat) = \P\Xvar(\fhat) \leq 2\Pn(\fhat) \leq 2\MinimaxEmp(\fhat) \leq 2\criticalradius^2,
\end{align*}
and the proof would be complete.
Consequently, the rest of the proof is devoted to showing
that $\FailEvent_1$ and $\FailEvent_2$ are unlikely to occur, namely 
the claims in \cref{eqn:FailEvent_1,eqn:FailEvent_2}.

\subsubsection{Relaxing the Failure Events}
In this section we define events that are easier to bound and that lead
to the stated result in \cref{eqn:FailEvent_1,eqn:FailEvent_2}. 

\paragraph{Relaxing the claim in \texorpdfstring{\cref{eqn:FailEvent_2}}{}}
	The difference $\P \Xvar(\f) - \Pn\Xvar(\f) $ is a concentration term.
	It is convenient to introduce the set of functions under consideration
	\begin{align*}
		\PartSetTD(\criticalradius) = \{ \f \in \fClass \; \mid \; \P\Xvar(\f) > \criticalradius^2 \}.
	\end{align*}
	To establish the claim in \cref{eqn:FailEvent_2} it is enough to establish that
	large deviations are unlikely for all functions with large expectation,
	i.e., that
	\begin{align}
		\label{eqn:FailEvent_1bis} 
		\text{$\exists \f \in \PartSetTD(\criticalradius)$ such that } \quad 
		\P \Xvar(\f) - \Pn\Xvar(\f) > \frac{1}{2}\P \Xvar(\f)
	\end{align} 
	can occur with probability at most $\FailureProbability/2$.
	
	\paragraph{Relaxing the claim in \texorpdfstring{\cref{eqn:FailEvent_1}}{}}
	In order to provide the required bound,
	we need to leverage the fact that $\fhat$ is minimizing $\MinimaxEmp(\f)$.
	\begin{align*}
		\MinimaxEmp(\fhat) 
		\leq
		\MinimaxEmp(\fstar) 
		=
		\LossEmp(\fstar,\fstar) - \LossEmp(\gbestemp{\fstar}, \fstar).
	\end{align*}
	The term to bound is the empirical (excess) risk of a realizable problem.
	For convenience, define the empirical process
	\begin{align*}
		\Yvar(\g) \defeq \Cost(\fstar,\fstar) - \Cost(\g,\fstar).
	\end{align*} 
	With the above definition we have
	\begin{align*}
		\Pn \Yvar(\g) 
		& = 
		\LossEmp(\fstar, \fstar) - \LossEmp(\g, \fstar), \\
		\P \Yvar(\g) 
		& = 
		\Loss(\fstar, \fstar) - \Loss(\g, \fstar) \leq 0
	\end{align*}
	To recap:
	if we can show that with probability $1-\FailureProbability/2$
	\begin{align}
	\label{eqn:CI2}
	\Pn \Yvar(\g)	
	\leq
	\frac{1}{2} \criticalradius^2
	\qquad \text{for all } \g \in \fClass
	\end{align}
	then under the same event we have the desired bound
	\begin{align*}
		\MinimaxEmp(\fhat) \leq \Pn \Yvar(\gbestemp{\fstar}) \leq \frac{1}{2} \criticalradius^2.
	\end{align*}
	
	\newpage
	\subsection{Concentration inequalities for finite classes}
	In this section we complete the proof for the special case where $\fClass$
	has finite cardinality.
	
	\subsubsection{Establishing \cref{eqn:FailEvent_1bis}}
	To complete the proof, we need to compute the threshold $\criticalradius$
	past which the event in \cref{eqn:FailEvent_1bis} becomes unlikely.
	
	The Bernstein's inequality (see e.g., \cite{wainwright2019high} for a reference), coupled with a union bound over each function 
	in $\PartSetTD(\criticalradius) \subseteq \fClass$ 
	ensures that the following event occurs with probability at most $\FailureProbability/2$
	\begin{align*}
		\text{$\exists  \f \in\PartSetTD(\criticalradius)$ such that } \quad  
		\P \Xvar(\f) - \Pn\Xvar(\f) 
		\gtrsim 
		\sqrt{\frac{\Var\Xvar(\f) \ln(\card{\fClass}/\FailureProbability)}{\nSamples} } 
		+ \frac{\ln(\card{\fClass}/\FailureProbability) }{\nSamples}.
	\end{align*}
	If we make the above right hand side larger
	then the event becomes even more unlikely.
	The term involving the variance can be upper bounded by upper bounding the variance
	\begin{align*}
		\Var\Xvar(\f) \leq \factor\P\Xvar(\f), 
		\qquad
		\text{where}
		\qquad
		\factor
		\lesssim
		\frac{1}{1-\BeCom},
	\end{align*}
	a result stated in \cref{lem:Xvariance}.
	If in addition the fast rate is dominated by the variance term,
	(we shall see in few lines that this is the case),
	namely if for all functions in $\PartSetTD(\criticalradius)$
	\begin{align}
	\label{eqn:AddReqFC}
		\frac{\ln(\card{\fClass}/\FailureProbability)}{\nSamples}
		\lesssim 
		\sqrt{\frac{\factor\P\Xvar(\f) \ln(\card{\fClass}/\FailureProbability)}{\nSamples} },
	\end{align}
	then we readily obtain the smaller (and more unlikely) event defined below
	\begin{align*}
	\text{$\exists \f \in\PartSetTD(\criticalradius)$ such that } \quad
		\P \Xvar(\f) - \Pn\Xvar(\f) 
		\gtrsim 
		\sqrt{\frac{\factor\P\Xvar(\f) \ln(\card{\fClass}/\FailureProbability)}{\nSamples} }.
	\end{align*}
	The fact that \cref{eqn:FailEvent_1bis} holds with probability at most $\FailureProbability{}/2$
	then would follow if its right hand side is even bigger than the right hand side 
	in the above display; such situation occurs if for all $\f \in \PartSetTD(\criticalradius)$
	\begin{align}
	\label{eqn:criticalInequalityFC}
		 \frac{1}{2}\P \Xvar(\f) 
		 \gtrsim
		 \sqrt{\frac{\factor\P\Xvar(\f) \ln(\card{\fClass}/\FailureProbability)}{\nSamples} }. 
	\end{align}
	Solving for $\P\Xvar(\f)$ gives the condition 
	\begin{align*}
		\P \Xvar(\f) \gtrsim \frac{\factor \ln(\card{\fClass}/\FailureProbability)}{\nSamples}.
		\end{align*}
	Such condition must be satisfied by all functions $\f\in \PartSetTD(\criticalradius)$,
	a fact that holds true by definition of $\PartSetTD(\criticalradius)$ as soon as $\criticalradius$ satisfies
	\begin{align}
	\label{eqn:CRFC}
		\criticalradius^2 
		\gtrsim 
		\frac{\factor \ln(\card{\fClass}/\FailureProbability)}{\nSamples}.
	\end{align}
	The value for $\criticalradius$ established by the above inequality
	ensures that any function $\f \in\PartSetTD(\criticalradius)$
	satisfies the bound in display in \cref{eqn:criticalInequalityFC}
	(recall the definition of $\PartSetTD(\criticalradius)$). 
	In addition, it also ensures that \cref{eqn:AddReqFC} is always satisfied, as promised 
	(observe that $\factor \geq 1$).
	
	To recap: we have computed the critical threshold $\criticalradius$
	past which \cref{eqn:FailEvent_1bis} occurs with vanishing probability, as desired.
	By doing so, we have also determined the rate of convergence $\criticalradius$
	of the minimax program, up to a constant.

	\subsubsection{Establishing \cref{eqn:CI2}}
		In this section we establish \cref{eqn:CI2},
		or equivalently that the following event has probability at most $\FailureProbability/2$:
		\begin{align}
			\label{eqn:CI2bis}
			\text{exists } \g \in \fClass \text{ such that} \quad
			\Pn \Yvar(\g)	
			>
			\frac{1}{2} \criticalradius^2.
		\end{align}
		We start from the inequality of Bernstein coupled with a union bound
		over each element of $\fClass$ to ensure that the following event 
		has probability at most $\FailureProbability/2$
		\begin{align*}
			\exists \g \in \fClass \qquad \text{such that} \qquad
			\Pn \Yvar(\g) - \P \Yvar(\g)  
			\gtrsim
			\sqrt{\frac{\Var\Yvar(\g) \ln(\card{\fClass} / \FailureProbability)}{\nSamples}}
			+
			\frac{\ln(\card{\fClass}/\FailureProbability)}{\nSamples}.
		\end{align*}
		If we make the right hand side in the above display any larger, 
		the event above becomes even more unlikely.
		We have the following bound on the variance (recall that $\P\Yvar(\g) \leq 0$),
		which we verify in \cref{lem:Yvar}
		\begin{align*}
			\Var\Yvar(\g) \lesssim - \P\Yvar(\g).
		\end{align*}
		We obtain the following (smaller) event
		\begin{align*}
			\exists \g \in \fClass \qquad \text{such that} \qquad
			\Pn \Yvar(\g) 
			-
			\P \Yvar(\g)
			\gtrsim 
			\sqrt{\frac{- \P\Yvar(\g) \ln(\card{\fClass} / \FailureProbability)}{\nSamples}}
			+
			\frac{\ln(\card{\fClass}/\FailureProbability)}{\nSamples}
			\end{align*}
			or equivalently 
			\begin{align*}
			\exists \g \in \fClass \qquad \text{such that} \qquad
			\Pn \Yvar(\g) 
			\geq
			\P \Yvar(\g)
			+
			c \sqrt{\frac{- \P\Yvar(\g) \ln(\card{\fClass} / \FailureProbability)}{\nSamples}}
			+
			\frac{\ln(\card{\fClass}/\FailureProbability)}{\nSamples}
		\end{align*}
		for some constant $c > 0$, a bound that
		holds with probability at most $\FailureProbability/2$.
		We would then be able to conclude that
		\cref{eqn:CI2bis} holds with probability at most $\FailureProbability/2$
		if its right hand side is always larger than the one in the above display, 
		namely when
		\begin{align*}
		\frac{1}{2}
			\criticalradius^2
			\geq
			\P \Yvar(\g) + 
			c\sqrt{\frac{- \P\Yvar(\g) \ln(\card{\fClass} / \FailureProbability)}{\nSamples}}
			+
			\frac{\ln(\card{\fClass}/\FailureProbability)}{\nSamples}
		\end{align*}
		The right hand side above is quadratic in $\sqrt{-\P\Yvar(\g)}$.
		Its maximum value\footnote{
		Notice that $\P \Yvar(\g)$ is negative while the square root term is positive;
		in particular, the right hand side is maximized when 
		$
			\P \Yvar(\g) \simeq \frac{\ln(\card{\fClass} / \FailureProbability)}{\nSamples}
		$.} is 
		\begin{align*}
			\frac{\ln(\card{\fClass} / \FailureProbability)}{\nSamples}
			\gtrsim
			\P \Yvar(\g) + 
			c\sqrt{\frac{- \P\Yvar(\g) \ln(\card{\fClass} / \FailureProbability)}{\nSamples}}
			+
			\frac{\ln(\card{\fClass}/\FailureProbability)}{\nSamples},
		\end{align*}
		and therefore it is sufficient that $\criticalradius$ satisfies the inequality
		\begin{align*}
			\criticalradius^2 
			\gtrsim
			\frac{\ln(\card{\fClass} / \FailureProbability)}{\nSamples}.
		\end{align*}
		In other words, we have determined the minimum value for $\criticalradius$
		past which \cref{eqn:CI2bis} becomes unlikely;
		furthermore, this requirement is already satisfied 
		by that presented in \cref{eqn:CRFC}.
	
	\newpage
	\subsection{Concentration inequalities for general functions}
	In this section we establish \cref{eqn:FailEvent_1bis,eqn:CI2}
	for general function classes. It is useful to define the following factor (up to a constant).
	\begin{align*}
		\factor
		\simeq
		\begin{cases}
			\frac{1}{1-\BeCom}, & \text{if $\fClass$ is non-convex} \\
			1, & \text{if $\fClass$ is convex}.
		\end{cases}
	\end{align*}
	
	\subsubsection{Establishing the claim in \cref{eqn:FailEvent_1bis}}
	In order to provide a bound to \cref{eqn:FailEvent_1bis},
	we need a suitable concentration inequality
	that can ensure fast rates by leveraging the variance of the process. 
	However, the analysis to follow deals with the worst-case variance 
	represented by the quantity
	$\sup_{\f \in \PartSetTD(\criticalradius)} \P\Xvar(\f)$
	which can be\footnote{
	We have 
	$\P\Xvar(\f) \leq \norm{\f - \T\f}{\Dist}^2 
	\leq 
	\norm{\f - \T\f}{\infty}^2
	\lesssim 1$.} of order one.
	In order to tightly connect the worst-case maximum variance to the actual value of $\P\Xvar(\f)$
	of the function responsible for violating the inequality in \cref{eqn:FailEvent_1bis},
	it is best to partition the set $\PartSetTD(\criticalradius)$ 
	\begin{align*}
		\PartSetTD(\criticalradius) = \cup_{m \in [M]} \PartSetTD_m
	\end{align*}
	according to the value of $\P\Xvar(\f)$, i.e.,
	using intervals that tightly bracket the possible values of $\P\Xvar(\f)$,
	as follows:
	$$ 
	\PartSetTD_m
	=
	\Big\{
	\f \in \PartSetTD(\criticalradius)
	\; \mid \; 
	\radius^2 < \P\Xvar(\f) \leq 2 \radius^2
	\Big\}, 
	\qquad \text{where } \radius^2 = 2^{m-1} \criticalradius^2.
	$$
	The partition starts at $m = 1 $ where $\radius = \criticalradius$
	and since (see footnote) $\P\Xvar(\f) \lesssim 1$,
	the partition can stop at $M \simeq \log_2(1/\criticalradius)$.
When $\f \in \PartSetTD_m$ we have $\P\Xvar(\f) \geq \radius^2$ 
and therefore we can create a larger event which is easier to bound
\begin{align*}
	\Big\{\exists \f \in \PartSetTD_m \; \mid 
	\P \Xvar(\f) - \Pn\Xvar(\f) > \frac{1}{2}\P \Xvar(\f) \Big\}
	\subseteq 
	\Big\{\exists \f \in \PartSetTD_m \; \mid 
	\P \Xvar(\f) - \Pn\Xvar(\f) > \frac{1}{2}\radius^2 \Big\} \defeq \Event_m.
\end{align*}
Let $\Event$ be the event in \cref{eqn:FailEvent_1bis};
using the above inclusion, we can claim
\begin{align*}
	\Event \subseteq \cup_{m \in [M]} \Event_m.
\end{align*}
At this point we can apply \cref{lem:TalagrandBracket};
rescaling $\FailureProbability$ coupled with the union bound
now gives a bound on the original event
\begin{align*}
	\Pro(\Event) 
	\leq 
	\sum_{m \in [M]} \Pro(\Event_m) 
	\leq 
	\FailureProbability/2.
\end{align*}
In order to apply \cref{lem:TalagrandBracket},
several conditions must be met. 
The bound on the variance is ensured by \cref{lem:Xvariance};
in addition, $\radius$ must satisfy the following two critical relations 
for appropriate constants and for all $m \in [M]$
\begin{align}
\label{eqn:ProblemDependentCriticalInequality}
	\E \sup_{\f \in \PartSetTD_m} 
	\Big\{ \P\Xvar(\f) - \Pn \Xvar(\f) \Big\}
	\lesssim \radius^2,  
	\qquad \text{and} \qquad
	(\factor + 1) \frac{\ln(1/( \FailureProbability\criticalradius ))}{\nSamples} 
	\lesssim \radius^2.
\end{align}
The condition on the left involves the Bellman operator $\T$
through $\Xvar(\f)$.
The requirement is relaxed in \cref{lem:RemovingBellman};
we obtain that it is sufficient that $\radius$ satisfies an inequality
that involves the following local Rademacher averages:
\begin{align}
\label{eqn:above}
	\CriticalCondition{\radius}
\end{align}
If both conditions admit a smallest positive solution $\criticalradius$
such that \cref{eqn:above} holds for all $\radius \geq \criticalradius$ 
then we can cover all cases $m \in [M]$ with the condition
$\radius \geq \criticalradius$ where $\criticalradius$ satisfies
\begin{align}
\label{eqn:CriticalCondition1}
	\CriticalCondition{\criticalradius}
\end{align}
Since $\radius \geq \criticalradius$, 
when the inequalities in \cref{eqn:CriticalCondition1} are satisfied,
\cref{eqn:above} is automatically satisfied as well.

\subsubsection{Establishing the claim in \cref{eqn:CI2}}

	Since $\P \Yvar(\g)  \leq 0$, it is sufficient to claim 
	that we are unlikely to witness 
	large deviations such as the one below\footnote{Notice that \cref{eqn:FailEvent2} is a bit weaker than \cref{eqn:FailEvent_1bis}
	due to the additional $\criticalradius$ term on the right hand side;
	this is due to the fact that we must also consider the small variance regime 
	$ - \P \Yvar(\g) \leq \criticalradius^2$ in this section.}:
	\begin{align}
	\label{eqn:FailEvent2}
	\exists \g \in \fClass \; \text{such that} \; 
	\Pn\Yvar(\g) - \P \Yvar(\g) 	
	>
	- \frac{1}{2}\P \Yvar(\g) 
	+
	\frac{1}{2}
	\criticalradius^2.
	\end{align}
	Let $\Event$ be the above event;
	we show that $\Event$ can occur with probability at most $\FailureProbability/2$.
	In the complement event, \cref{eqn:CI2} must hold.
	
We construct a family of sets $\{ \Event_m \}$ such that
\begin{align*}
	\EventY \subseteq \cup_{m \in \{0,1,2,\dots,M \}} \EventY_m
\end{align*}
where each event $\EventY_m$ is described in the analysis to follow.
\paragraph{Small variance event}
Let us consider the set of functions with small variance
\begin{align*}
	\fClass_0 = \{\g \in \fClass \mid 0 \leq - \P\Yvar(\g) \leq \frac{1}{2}\criticalradius^2 \}.
\end{align*}
The associated event is 
\begin{align*} 
	& \Bigg\{
	\exists \g \in \fClass_0 \; \text{such that} \;
	\Pn \Yvar(\g) - \P \Yvar(\g)
	> 
	- \frac{1}{2}\P\Yvar(\g) + \frac{1}{2}\criticalradius^2 
	\Bigg\} \\
	\subseteq &
	\Bigg\{
	\exists \g \in \fClass_0 \; \text{such that} \;
	\Pn \Yvar(\g) - \P \Yvar(\g) 
	> 
	\frac{1}{2}\criticalradius^2 
	\Bigg\} \\
	\defeq & \PartEventY_0.
	\end{align*}

\paragraph{Large variance events}
Consider the following partitioning to control the variance of the empirical process
\begin{align*}
	\fClass_m = \{ \radius^2 < - \P\Yvar(\g) \leq 2 \radius^2 \},
	\qquad \text{where }  \radius^2 \defeq 2^{m-2} \criticalradius^2 
	\geq \frac{1}{2} \criticalradius^2,
	\quad \text{for } m = 1,2,\dots,M.
\end{align*}
The partition stops at $M \simeq \ln (1/\criticalradius)$ as $- \P\Yvar(\g) \lesssim 1$.
The associated events are 
\begin{align*}
	& \hphantom{=} 
	\Bigg\{ 
	\exists \g \in \fClass_m 
	\; \text{such that} \;
	\Pn \Yvar(\g) - \P \Yvar(\g) 
	\geq - \frac{1}{2}\P\Yvar(\g) + \frac{1}{2} \criticalradius^2
	\Bigg\} \\
	& = 
	\Bigg\{ 
	\exists \g \in \fClass_m 
	\; \text{such that} \;
	\Pn \Yvar(\g) - \P \Yvar(\g) 
	\geq - \frac{1}{2}\P\Yvar(\g)
	\Bigg\} \\
	& \subseteq
	\Bigg\{ 
	\exists \g \in \fClass_m 
	\; \text{such that} \;
	\Pn \Yvar(\g) - \P \Yvar(\g)
	\geq \frac{1}{2} \radius^2
	\Bigg\} \\
	& \defeq \PartEventY_m.
\end{align*}

\paragraph{Putting together the pieces}
After rescaling $\FailureProbability$ to become $\FailureProbability / (2(M+1))$
and using the union bound we can finally
apply \cref{lem:TalagrandBracket} 
to bound the event in \cref{eqn:FailEvent2}
\begin{align*}
	\Pro(\EventY) \leq \sum_{m} \Pro(\PartEventY_m) \leq \FailureProbability/2,
\end{align*}
In order to apply \cref{lem:TalagrandBracket},
we need to verify the assumptions in the statement of the lemma.

We have the following variance calculation reported in \cref{lem:Yvar}
\begin{align}
\label{eqn:VarCalcII}
\Var[\Yvar(\g)] \lesssim 
- \P\Yvar(\g),
\qquad 
\text{for all $\g \in \fClass$.}
\end{align}
By the symmetry of $\fClass$, every time \cref{lem:TalagrandBracket} is invoked,
for every $m = 1,2, \dots,M$ the associated value for $\radius$ must satisfy
\begin{align*}
	\E \sup_{\g \in \fClass_m} 
	\Big\{ \P\Yvar(\g) - \Pn \Yvar(\g) \Big\}
	\lesssim \radius^2,  
	\qquad \text{and} \qquad
	\frac{\ln(1/( \FailureProbability \criticalradius))}{\nSamples} \lesssim \radius^2.
\end{align*}
The condition on the right is already in the final form;
the one on the left involves the Bellman operator $\T$
through $\Yvar(\f)$.
In order to obtain a bound that only depends on the class $\fClass$,
the requirement is relaxed in \cref{lem:RemovingBellmanFromY}.
After the relaxation, we obtain that 
it is sufficient that $\radius$ satisfies the inequalities
\begin{align*}
	\Rademacher_\nSamples \big [ \big(\fClass - \fstar \big) 
	(\radius)\big ] 
	\lesssim \radius^2
	\qquad \text{and} \qquad
	\frac{\ln(1/(\FailureProbability\criticalradius))}{\nSamples} \lesssim \radius^2.
\end{align*}

\newpage

\subsubsection{Talagrand's Bound}
In this section we assume that the prerequisites for using Talagrand are met in order to avoid measurability issues,
and bound the supremum of an empirical process.
Let $\Wvar$ be a random variable on a certain probability space. 
For a given function class $\hClass$, define the supremum of the empirical process
\begin{align*}
	\Zvar
	& = 
	\sup_{\h \in \hClass} 
	\Big\{ \P\h(\Wvar) - \Pn \h(\Wvar) \Big\}.
\end{align*}
We use Talagrand's bound \cite{talagrand1996new} to derive a
tail bound to $\Zvar$ when the variance of the process is tightly bracketed.
(Here $ \Mag \geq 1$).

\begin{lemma}[Talagrand's Bound with Bracketed Variance]
\label{lem:TalagrandBracket}
The event
\begin{align*}
	\TEvent \; : \quad 
	\Zvar > \frac{1}{2}\radius^2
\end{align*}
occurs with probability at most $\FailureProbability$
if the following conditions are satisfied for appropriate universal constants
\begin{align}
\label{eqn:TalCondi}
	\Var [\h(\Wvar)] \leq \Mag \P \h(\Wvar) \leq 2\Mag\radius^2, 
	\qquad \text{and} \qquad
	\E\Zvar \lesssim \radius^2,  
	\qquad \text{and} \qquad
	(\Mag + 1) \frac{\lTerm}{\nSamples} \lesssim \radius^2.
\end{align}
\end{lemma}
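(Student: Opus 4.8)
The plan is to recognize \cref{lem:TalagrandBracket} as a packaging of Talagrand's concentration inequality \cite{talagrand1996new} for the supremum of an empirical process, used in its sharp, variance-sensitive (Bousquet) form. I would first center the process: for each $\h \in \hClass$ the summand $\P\h(\Wvar) - \h(\Wvar)$ has mean zero, is bounded in absolute value by a universal constant $\supVal$ (in both invocations of the lemma $\h$ is a bounded difference of squared temporal-difference costs such as $\Xvar$ or $\Yvar$, and every function in $\fClass$ is bounded in supremum norm), and has variance $\Var[\h(\Wvar)]$. The worst-case variance over $\hClass$ is $\supSM \defeq \sup_{\h\in\hClass}\Var[\h(\Wvar)]$, which by the first hypothesis satisfies $\supSM \leq 2\Mag\radius^2$. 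Since negating the class preserves variance and boundedness, Talagrand's bound applies directly to $\Zvar = \sup_{\h}\{\P\h(\Wvar) - \Pn\h(\Wvar)\}$.

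Next I would invoke the inequality with deviation parameter $\lTerm$, so that the failure probability is $\exp(-\lTerm) = \FailureProbability$. This yields, with probability at least $1 - \FailureProbability$,
\begin{align*}
	\Zvar \leq \E\Zvar + \sqrt{\frac{2\lTerm\big(\supSM + 2\supVal\,\E\Zvar\big)}{\nSamples}} + \frac{\supVal\,\lTerm}{3\nSamples}.
\end{align*}
The remaining work is to show the right-hand side is at most $\tfrac12\radius^2$, which I would do term by term. The mean term is controlled directly by the second hypothesis $\E\Zvar \lesssim \radius^2$. For the square-root term I substitute $\supSM \leq 2\Mag\radius^2$ and $\E\Zvar \lesssim \radius^2$ and factor out $\radius$, obtaining a bound of order $\radius\sqrt{\Mag\,\lTerm/\nSamples}$; the third hypothesis $(\Mag+1)\lTerm/\nSamples \lesssim \radius^2$ gives $\Mag\,\lTerm/\nSamples \lesssim \radius^2$, hence $\radius\sqrt{\Mag\,\lTerm/\nSamples} \lesssim \radius^2$. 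The linear term is of order $\lTerm/\nSamples \lesssim \radius^2$ by the same hypothesis (recall $\Mag \geq 1$). Choosing the universal constants hidden in the three $\lesssim$ hypotheses small enough makes the three contributions sum to at most $\tfrac12\radius^2$, so $\TEvent$ fails with probability at most $\FailureProbability$.

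The step I expect to require the most care is the treatment of the variance proxy inside the square root. Because $\supSM$ is only bounded by $2\Mag\radius^2$ and the amplification factor $\Mag = \tfrac{1}{1-\BeCom}$ can be large, one must verify that the third hypothesis is precisely calibrated to absorb this $\Mag$-dependence: this is exactly why the factor $(\Mag+1)$, rather than a constant, appears there. A secondary subtlety is the self-referential appearance of $\E\Zvar$ both outside and inside the square root; this resolves cleanly only because $\E\Zvar \lesssim \radius^2$ is \emph{supplied} as a hypothesis rather than bounded from scratch, so the $2\supVal\,\E\Zvar$ term sits at the same $O(\radius^2)$ scale as the leading variance term. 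Finally, the applicability of Talagrand's inequality (countability and measurability of the supremum) is precisely the prerequisite the paper assumes away immediately before the statement.
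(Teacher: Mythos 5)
Your proposal is correct and follows essentially the same route as the paper: both invoke Talagrand's inequality in its Bousquet (variance-sensitive) form with the variance proxy combining $\sup_{\h}\Var[\h(\Wvar)] \leq 2\Mag\radius^2$ and a multiple of $\E\Zvar$, then use the three hypotheses to show the resulting deviation bound is at most $\tfrac{1}{2}\radius^2$, with the third hypothesis absorbing the $(\Mag+1)$ factor exactly as you describe. The only cosmetic difference is that the paper phrases the conclusion as an event inclusion $\TEvent \subseteq \TailEvent$ rather than your direct high-probability bound, which is logically equivalent.
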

The strategy is to create a more `natural' tail event $\TailEvent$ 
associated to a variance-based concentration inequality.
The concentration inequality will ensure
that $\Pro(\TailEvent) \leq \FailureProbability$.
Then we show that the event $\TEvent$ is contained in $\TailEvent$,
ensuring $\Pro(\TEvent) \leq \Pro(\TailEvent) \leq \FailureProbability$.

Talagrand's bound (see Thm 3.27 and Eq. 3.85 in the book \cite{wainwright2019high}) applies to the tail event
\begin{align}
\label{eqn:Talagrand}
	\TailEvent \: : \quad
	\Zvar
	& \gtrsim
	\E \Zvar
	+
	\sqrt{\frac{\sigma^2 \lTerm}{\nSamples}}
	+
	\frac{\lTerm}{\nSamples}.
\end{align}
where the variance-proxy of the process is 
\begin{align*}
\sigma^2 
& \defeq
\sup_{\h \in \hClass} 
\P \Big\{ 
\h(\Wvar) 
- 
\P \h(\Wvar) 
\Big\}^2
+
2\P \Zvar.
\end{align*}
It ensures that such large deviations
have small probability of occurring
\begin{align}
\label{eqn:TailBound}
	\Pro(\TailEvent) \leq \FailureProbability.
\end{align}
We now proceed to showing that 
$$\TEvent 
= 
\Big\{ \Zvar \geq \frac{1}{2}\radius^2 \Big\}
\subseteq 
\Big\{ \Zvar \geq 
	\E \Zvar
	+
	\sqrt{\frac{\sigma^2 \lTerm}{\nSamples}}
	+
	\frac{\lTerm}{\nSamples} 
	\Big\} 
	=
	\TailEvent, $$
which allows us to conclude.
In order to show the inclusion, we need to ensure that
\begin{align*}
\frac{1}{2}\radius^2
\geq
\E \Zvar
+
\sqrt{\frac{\sigma^2 \lTerm}{\nSamples}}
+
\frac{\lTerm}{\nSamples}
\defeq \rhs.
\end{align*}
We start from the above rhs and upper bound it
until we obtain $\radius^2$.
By combining the bound on the variance with the one on the expectation we obtain
\begin{align*}
\sigma^2 
& =
\sup_{\h \in \hClass} 
\P \Big\{ 
\h(\Wvar) 
- 
\P \h(\Wvar) 
\Big\}^2
+
2\P \Zvar \\
& =
\sup_{\h \in \hClass} 
\Var [ \h(\Wvar) ]
+
2\P \Zvar \\
& \leq
\sup_{\h \in \hClass} 
\Mag \P \h(\Wvar) 
+
2\P \Zvar \\
& \leq
2\Mag\radius^2
+
2\P \Zvar \\
& \lesssim
(\Mag+1)\radius^2,
\end{align*}
where the last step used \cref{eqn:TalCondi}.
This implies the upper bound
\begin{align*}
	\rhs
	& \leq
	\E \Zvar
	+
	\radius \sqrt{(\Mag+1) \frac{\lTerm}{\nSamples} }
	+
	\frac{\lTerm}{\nSamples} \\
	& \leq
	\frac{1}{2}\radius^2,
\end{align*}
where the last step used again \cref{eqn:TalCondi} for appropriately tuned numerical constants.
Therefore, we have shown the inclusion 
$\TEvent \subseteq \TailEvent$; 
combined with the tail bound \cref{eqn:TailBound} we obtain
\begin{align*}
	\Pro(\TEvent) \leq \Pro(\TailEvent) \leq \FailureProbability,
\end{align*}
as claimed.

\newpage
\subsubsection{Simplifying the Rademacher Complexities}
\begin{lemma}[Rademacher Complexities for the $X$ process]
\label{lem:RemovingBellman}
We have the relation
	\begin{align*}
	\E
	\sup_{\P \Uvar(\f) \leq 2 \radius^2 } 
	\Big\{ \P\Uvar(\f) - \Pn \Uvar(\f) \Big\}
	\lesssim
	\E \sup_{\P \Uvar(\f) \leq 2\radius^2}  
	\frac{1}{\nSamples} \SumOverSamples
	\RadVar_\iSample[ \Cost(\f,\f) - \Cost(\gbest{\f},\f )].
	\end{align*}
\end{lemma}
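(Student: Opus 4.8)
The plan is to read the left-hand side as the expected one-sided supremum of an empirical process indexed by a \emph{fixed} class of functions, and then invoke the standard ghost-sample symmetrization argument. For each $\f \in \fClass$ view $\Uvar(\f) = \Cost(\f,\f) - \Cost(\gbest{\f},\f)$ as a single bounded function of a tuple $\sars{}$, and gather the relevant ones into
\begin{align*}
	\mathcal G(\radius) = \big\{ \Uvar(\f) \;\mid\; \f \in \fClass, \; \P\Uvar(\f) \leq 2\radius^2 \big\}.
\end{align*}
The point I would stress first is that the localizing constraint $\P\Uvar(\f) \leq 2\radius^2$ is a \emph{population} quantity, hence deterministic: the index set does not move with the data, so the supremum is taken over the same class $\mathcal G(\radius)$ for every realization of the sample (and of the ghost sample introduced below). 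Boundedness of $\f$ and $\gbest{\f}$ together with $\reward \in [0,1]$ makes every $\Uvar(\f)$ bounded and all expectations finite; measurability of the suprema is assumed as in the statement.

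First I would introduce an independent ghost sample with empirical operator $\Pn'$ distributed as $\Pn$, write $\P\Uvar(\f) = \E\,\Pn'\Uvar(\f)$, and apply Jensen's inequality (the supremum is convex, so moving the ghost expectation outside the supremum can only increase the value):
\begin{align*}
	\E \sup_{\P\Uvar(\f) \leq 2\radius^2} \big\{ \P\Uvar(\f) - \Pn\Uvar(\f) \big\}
	\leq
	\E \sup_{\P\Uvar(\f) \leq 2\radius^2} \big\{ \Pn'\Uvar(\f) - \Pn\Uvar(\f) \big\}.
\end{align*}
The difference $\Pn'\Uvar(\f) - \Pn\Uvar(\f)$ is an average over $\iSample$ of the per-tuple differences; since the $\iSample$-th real and ghost tuples are i.i.d., each such difference is symmetric, so I may multiply the $\iSample$-th term by a Rademacher sign $\RadVar_\iSample$ without changing the joint law. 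Inserting these signs and then splitting the supremum of the difference into two separate suprema yields two copies of the one-sided Rademacher average, each bounded by the target quantity; the factor of two is absorbed by the $\lesssim$, giving
\begin{align*}
	\E \sup_{\P\Uvar(\f) \leq 2\radius^2} \big\{ \P\Uvar(\f) - \Pn\Uvar(\f) \big\}
	\lesssim
	\E \sup_{\P\Uvar(\f) \leq 2\radius^2} \frac{1}{\nSamples} \SumOverSamples \RadVar_\iSample \big[ \Cost(\f,\f) - \Cost(\gbest{\f},\f) \big].
\end{align*}

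The computation is essentially routine, and I do not expect a genuine obstacle; the one subtlety worth flagging is exactly why the localization poses no difficulty here. Because the radius constraint is imposed through the population expectation $\P\Uvar(\f)$ rather than its empirical counterpart $\Pn\Uvar(\f)$, the index class is frozen and the exchange of the real and ghost samples is clean. Had the constraint been empirical, the class would fluctuate with the draw (and differ between the two samples), and symmetrization would require a more careful peeling argument rather than a one-line ghost-sample swap. I would also note that the Bellman operator still enters the right-hand side, but only through the fixed projected backups $\gbest{\f}$; stripping it out so the bound depends on $\fClass$ alone is the job of the development that follows this lemma.
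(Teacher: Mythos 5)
Your proposal is correct and follows essentially the same route as the paper: both proofs are the standard ghost-sample symmetrization (introduce an independent copy, apply Jensen's inequality to pull the ghost expectation outside the supremum, insert Rademacher signs using the symmetry of the i.i.d.\ differences, and split into two one-sided Rademacher averages, absorbing the factor of two into $\lesssim$), which the paper itself attributes to \cite{wainwright2019high}, Chapter 4. Your added remark that the localization constraint $\P\Uvar(\f) \leq 2\radius^2$ is a population-level (hence deterministic) constraint, so the index class is fixed and the symmetrization is clean, is a correct and worthwhile clarification of why no peeling is needed at this step.
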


First notice that we have
\begin{align*}
	\norm{\f - \gbest{\f}}{\Dist} 
	\leq 
	\norm{\f - \T\f }{\Dist} 
	+ 
	\norm{ \gbest{\f} - \T\f}{\Dist} 
	\leq 2\norm{ \f - \T\f}{\Dist},
\end{align*}
and so using \cref{lem:BeInc}
\begin{align*}
	\norm{\f - \gbest{\f}}{\Dist}^2 \lesssim \frac{1}{1-\BeCom} \P \Uvar(\f).
\end{align*}
If instead $\fClass$ is convex, the Pythagoras' theorem ensures
\begin{align}
	\norm{\f - \gbest{\f}}{\Dist}^2 \leq  \P \Uvar(\f),
\end{align}
We handle both cases with 
\begin{align}
\label{eqn:LimitedExpansion}
	\norm{\f - \gbest{\f}}{\Dist}^2 \lesssim \factor \P \Uvar(\f).
\end{align}
It is useful to rewrite the left hand side in the statement of the lemma
in a better form first by using a symmetrization argument,
for which it is temporarily useful to emphasize the dependence on the random variable
$\Gvar = \sars{}$ 
\begin{align*}
	& = 
	\E \sup_{\P \Uvar(\f) \leq 2\radius^2} 
	\Big\{ \P\Uvar(\f)(\Gvar) - \Pn \Uvar(\f)(\Gvar_\iSample) \Big\} \\
	& =
	\E \sup_{\P \Uvar(\f) \leq 2\radius^2} 
	\frac{1}{\nSamples} \SumOverSamples
	\Big\{ \P\Uvar(\f)(\Gvar) - \Uvar(\f)(\Gvar_\iSample) \Big\} \\
	\intertext{Upon defining i.i.d. random variables $\widetilde \Gvar_i$ we can write}
	& =
	\E \sup_{\P \Uvar(\f) \leq 2\radius^2}  
	\frac{1}{\nSamples} \SumOverSamples
	\Big\{ \P\Uvar(\f)(\widetilde \Gvar_i) - \Uvar(\f)(\Gvar_\iSample) \Big\}.
	\intertext{Using Jensen's inequality we obtain}
	& \leq
	\E \sup_{\P \Uvar(\f) \leq 2\radius^2}  
	\frac{1}{\nSamples} \SumOverSamples
	\Big\{ \Uvar(\f)(\widetilde \Gvar_i) - \Uvar(\f)(\Gvar_\iSample) \Big\}. 
	\intertext{Now introduce the Rademacher random variables $\RadVar_\iSample$}
	& =
	\E \sup_{\P \Uvar(\f) \leq 2\radius^2}  
	\frac{1}{\nSamples} \SumOverSamples
	\RadVar_\iSample \Big\{ \Uvar(\f)(\widetilde \Gvar_i) - \Uvar(\f)(\Gvar_\iSample) \Big\} \\
	& \leq
	2 \E \sup_{\P \Uvar(\f) \leq 2\radius^2}  
	\frac{1}{\nSamples} \SumOverSamples
	\RadVar_\iSample \Big\{ \Uvar(\f)(\Gvar_i) \Big\}  \\
	& =
	2 \E \sup_{\P \Uvar(\f) \leq 2\radius^2}  
	\frac{1}{\nSamples} \SumOverSamples
	\RadVar_\iSample[ \Cost(\f,\f) - \Cost(\gbest{\f},\f )].
	\end{align*}
	The above argument is standard (see the textbook \cite{wainwright2019high}, chapter 4).

\begin{lemma}[Rademacher Complexities for the $Y$ process]
\label{lem:RemovingBellmanFromY}
We have the relation
	\begin{align*}
	\E
	\sup_{- \P \Yvar(\g) \leq 2 \radius^2 } 
	\Big\{ \P\Yvar(\g) - \Pn \Yvar(\g) \Big\}
	\leq
	\Rademacher_\nSamples\{ \Increment \in \fClass - \fClass, \mid \norm{\Increment}{\Dist}^2 
	\leq 2  \radius^2 \}.
	\end{align*}
\end{lemma}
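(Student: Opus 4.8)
The plan is to follow the symmetrization used for the $X$-process in \cref{lem:RemovingBellman} and then push the argument one step further via a contraction inequality; this extra step is available here because, once $\fstar$ is factored out, $\Yvar(\g)$ becomes \emph{affine-plus-quadratic} in the increment $\Increment = \fstar - \g$. First I would symmetrize exactly as in \cref{lem:RemovingBellman}: introducing i.i.d.\ ghost samples, applying Jensen's inequality, and inserting Rademacher signs $\RadVar_\iSample$ yields
\begin{align*}
\E\sup_{-\P\Yvar(\g)\leq2\radius^2}\big\{\P\Yvar(\g)-\Pn\Yvar(\g)\big\}
\leq
2\,\E\sup_{-\P\Yvar(\g)\leq2\radius^2}\frac{1}{\nSamples}\SumOverSamples\RadVar_\iSample\,\Yvar(\g)(\Gvar_\iSample).
\end{align*}

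Next I would record an exact algebraic identity. For a tuple $\Gvar=\sars{}$, writing $\Increment=\fstar-\g$ and recalling the TD error $(\TD\fstar)\sars{}=\fstar\psa-\reward-\discount\fstar(\successorstate,\policy)$ from \cref{eqn:TDandBEerr}, the substitution $\g\psa=\fstar\psa-\Increment\psa$ gives
\begin{align*}
\Yvar(\g)(\Gvar)
=\Cost(\fstar,\fstar)(\Gvar)-\Cost(\g,\fstar)(\Gvar)
=2\,(\TD\fstar)\sars{}\,\Increment\psa-\big[\Increment\psa\big]^2.
\end{align*}
In parallel, I would identify the index set in population terms: by \cref{lem:ExpectiCost} together with the Bellman fixed-point identity $\T\fstar=\fstar$ (valid under \cref{asm:Realizability}, since $\fstar$ is the action value function of $\policy$), one has $-\P\Yvar(\g)=\Loss(\g,\fstar)-\Loss(\fstar,\fstar)=\norm{\g-\T\fstar}{\Dist}^2=\norm{\Increment}{\Dist}^2$, so the constraint $-\P\Yvar(\g)\leq2\radius^2$ is \emph{exactly} $\norm{\Increment}{\Dist}^2\leq2\radius^2$.

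The remaining and only delicate step is to strip the sample-dependent map off each summand, and this is where I expect the main work. Conditionally on the data, I would apply the Ledoux--Talagrand contraction principle to the functions $\varphi_\iSample(t)=2\,(\TD\fstar)\sars{\iSample}\,t-t^2$: each satisfies $\varphi_\iSample(0)=0$, and on the admissible range $\abs{t}\leq2$ (both $\fstar$ and $\g$ are bounded by $1$ in supremum norm) its derivative obeys $\abs{2(\TD\fstar)\sars{\iSample}-2t}\leq 2\cdot 3+2\cdot 2$, because $\abs{(\TD\fstar)\sars{}}\leq1+1+\discount\leq3$. Hence each $\varphi_\iSample$ is Lipschitz with a universal constant, and contraction bounds the symmetrized sum by a constant multiple of $\E\sup\frac{1}{\nSamples}\SumOverSamples\RadVar_\iSample\,\Increment(\SState_\iSample,\AAction_\iSample)$ taken over increments with $\norm{\Increment}{\Dist}^2\leq2\radius^2$. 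The subtlety I would flag is that the contraction principle is cleanest for the one-sided supremum while $\Rademacher_\nSamples$ carries an absolute value; since $\sup_\g(\cdot)\leq\sup_\g\abs{\cdot}$ pointwise, bounding the one-sided supremum by the absolute one is lossless and produces exactly $\Rademacher_\nSamples$.

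Finally, the increments range over $\fstar-\fClass$, which is contained in $\fClass-\fClass$ since $\fstar\in\fClass$; enlarging the index set to $\{\Increment\in\fClass-\fClass:\norm{\Increment}{\Dist}^2\leq2\radius^2\}$ only increases the complexity and yields the stated inequality, where I note that the tighter version with $\fClass-\fstar$ that feeds \cref{eqn:RateOfConvergenceGeneral} holds verbatim.
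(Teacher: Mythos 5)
Your proof is correct and takes essentially the same route as the paper's: symmetrization as in \cref{lem:RemovingBellman}, the identity $-\P\Yvar(\g) = \norm{\g - \fstar}{\Dist}^2$ (via \cref{lem:ExpectiCost} and $\T\fstar = \fstar$) to turn the localization into the constraint $\norm{\Increment}{\Dist}^2 \leq 2\radius^2$, Talagrand's contraction principle to strip the quadratic cost difference down to the linear increment, and enlargement of the index set to $\fClass - \fClass$. The only cosmetic difference is how the per-sample Lipschitz property is verified: you expand $\Yvar(\g) = 2(\TD\fstar)\Increment - \Increment^2$ and bound the derivative, whereas the paper factors the difference of squares $\abs{(\fstar - \Backup\fstar)^2 - (\g - \Backup\fstar)^2} \lesssim \abs{\fstar - \g}$; both yield the same contraction step with a universal constant.
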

Notice that we have
\begin{align}
\label{eqn:LimitedExpansionYprocess}
\norm{\g - \fstar}{\Dist}^2 = - \P\Yvar(\g).
\end{align}
by definition of $\Yvar$.
It is useful to rewrite the left hand side in the statement of the lemma
in a better form first by using a symmetrization argument;
this step is analogous to that in \cref{lem:RemovingBellman}, 
and thus here we report only the final bound:
\begin{align*}
	\E \sup_{- \P \Yvar(\g) \leq 2\radius^2} 
	\Big\{ \P\Yvar(\g) - \Pn \Yvar(\g) \Big\}
	& \leq
	2 \E \sup_{ - \P \Yvar(\g) \leq 2\radius^2}  
	\frac{1}{\nSamples} \SumOverSamples
	\RadVar_\iSample[ \Cost(\fstar ,\fstar) - \Cost(\g,\fstar )].
	\end{align*}
	Using \cref{eqn:LimitedExpansionYprocess} we obtain 
	\begin{align*}
	& =
	2 \E \sup_{\norm{\g - \fstar}{\Dist}^2\leq 2\radius^2} 
	\frac{1}{\nSamples} \SumOverSamples
	\RadVar_\iSample[ \Cost(\fstar,\fstar) - \Cost(\g,\fstar )].
	\intertext{
	Since 
	$$
	\abs{\Cost(\fstar ,\fstar) - \Cost(\g,\fstar )}
	= 
	\abs{(\fstar - \Backup\fstar)^2 - (\g - \Backup\fstar)^2}
	=
	\abs{
	(\fstar - \Backup\fstar - \g + \Backup\fstar)
	(\fstar - \Backup\fstar + \g - \Backup\fstar)
	}
	\lesssim 
	\abs{\fstar - \g},$$
	the Talagrand's contraction principle (see \cite{talagrand1996new} or Thm A.6 in \cite{bartlett2005local}) ensures}
	& \leq
	2 \E \sup_{\norm{\g - \fstar}{\Dist}^2\leq 2\radius^2} 
	\bigabs{ \frac{1}{\nSamples} \SumOverSamples
	\RadVar_\iSample (\g - \fstar)},	
\end{align*}
Thus we can re-write the above as
\begin{align*}
	2\E \sup_{\Increment  \in \fClass - \fstar, \; \norm{\Increment}{\Dist}^2\leq 2 \radius^2} 
	\bigabs{
	\frac{1}{\nSamples} \SumOverSamples
	\RadVar_\iSample{\Increment}
	},	
\end{align*}
which is the Rademacher complexity of the set 
\begin{align*}
	(\fClass - \fstar)\big( 2 \radius \big) \defeq
	\Big\{\Increment \in \fClass - \fstar, \; 
	\norm{\Increment}{\Dist}^2 \leq 2 \radius^2 \Big\}.
\end{align*}

\newpage
\section{Technical Results}
\label{sec:TechRes}

\subsection{Proof of \fullref{prop:iLin}}
\label{sec:iLin}	
	Since $\fClass$ is linear, the projector $\Projector$ onto $\fClass$ is a linear map.
	We can write
	\begin{align*}
		\f - \T\f 
		& = \f - \T\f - (\fstar - \T\fstar) \\
		& = (\f - \fstar) - \discount\Transition{}(\f - \fstar) \\
		& = (\IdentityOperator - \discount\Transition{})(\f - \fstar)
	\end{align*}
	and
	\begin{align*}
		\gbest{\f} - \T\f 
		& = \Projector\T\f - \T\f \\
		& = \Projector\T\f - \T\f - (\underbrace{\Projector\T\fstar}_{\fstar} - \T\fstar) \\
		& = \discount\Projector\Transition{}(\f - \fstar) - \discount\Transition{}(\f - \fstar) \\
		& = \discount( \Projector - \IdentityOperator) \Transition{}(\f - \fstar).
	\end{align*}
	Notice that $A = \discount( \Projector - \IdentityOperator) \Transition{}$ and
	$B = (\IdentityOperator - \discount\Transition{})$ 
	are both linear operators.
	If we denote with $\Increment = \f - \fstar$ the increments, we have
	\begin{align}
	\label{eqn:Inom}
		\Incompleteness(\radius) = 
		\sup_{\Increment \in \fClassLin, \; \norm{B\Increment}{\Dist} \leq \radius} 
		\norm{A\Increment}{\Dist}.
	\end{align}
	Fix $\radius > 0$ and let $\BeCom$ satisfy $\Incompleteness(\radius) = \BeCom \radius$
	for that specific value of $\radius$.
	Now, consider any other radius $\radius' > \radius$;
	it must be representable as $\radius' = c\radius$ for some constant $c > 1$.
	Then the function $\Increment' = c \Increment \in \fClassLin$ is feasible for the program below
	if $\Increment$ is feasible for the one in \cref{eqn:Inom}
		\begin{align}
		\label{eqn:Ilarge}
		\Incompleteness(c\radius) = 
		\sup_{\Increment \in \fClassLin, \; \norm{B\Increment}{\Dist} \leq c\radius} 
		\norm{A\Increment}{\Dist}.
	\end{align}
	This implies $\Incompleteness(c\radius) \geq \BeCom c\radius$.
	Now assume that the inequality is strict to derive a contradiction.
	That is, assume $\Incompleteness(c\radius) > \BeCom c\radius$
	and let $\Delta'$ be a maximizer of \cref{eqn:Ilarge}.
	Then the function $\Delta = \Delta'/c$ is feasible for 	\cref{eqn:Inom}
	and it gives $\Incompleteness(\radius) > \BeCom \radius$, contradiction,
	because we assumed $\Incompleteness(\radius) = \BeCom \radius$.
	Therefore we must have $\Incompleteness(\radius') 
	=  \Incompleteness(c\radius) = \BeCom c\radius = \BeCom \radius' $
	for any $\radius' > \radius$.
	Since $\radius$ is arbitrary, and $\Incompleteness(0) = 0$ 
	follows from \cref{prop:iBehav},
	the proof is complete.

\begin{lemma}[Expectation of the Single Cost]
\label{lem:ExpectiCost}
	$$
	\E \Cost(\g,\f)
	=
	\Loss{}(\g,\f) 
	+ 
	\E_{(\state,\action) \sim \Dist} 
	\Var_{ \reward \sim \RewardLaw\psa, \; \successorstate \sim \Transition{}(\state,\action)}
	\Big[ (\Backup\f)\pobs \Big].
	$$
\end{lemma}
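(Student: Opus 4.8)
The plan is to compute the expectation by first conditioning on the state--action pair $\psa$ and then applying the elementary bias--variance decomposition. I would begin by invoking the tower rule to write the expectation over the tuple $\pobs$ as an outer expectation over $\psa \sim \Dist$ and an inner expectation over the reward and successor state. Since the inner expectation over $\successoraction \sim \policy(\state)$ appearing in \cref{eqn:CostFunction} is just $\f(\successorstate,\policy)$, the single cost is exactly $\Cost(\g,\f) = \big(\g\psa - (\Backup\f)\pobs\big)^2$, so that
\[
\E\Cost(\g,\f) = \E_{\psa \sim \Dist}\,\E_{\reward \sim \RewardLaw\psa,\,\successorstate \sim \Transition{}\psa}\Big[\big(\g\psa - (\Backup\f)\pobs\big)^2\Big].
\]

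For fixed $\psa$, the value $\g\psa$ is deterministic while $(\Backup\f)\pobs = \reward + \discount\f(\successorstate,\policy)$ is random through $\reward$ and $\successorstate$. The key identity is that, by the definitions of $\Backup$ and $\T$, the conditional mean of the random backup is the population backup, i.e.\ $\E_{\reward,\successorstate}[(\Backup\f)\pobs] = (\T\f)\psa$. Applying the scalar identity $\E[(c-Y)^2] = (c-\E Y)^2 + \Var Y$ with the constant $c = \g\psa$ and the random variable $Y = (\Backup\f)\pobs$ then yields
\[
\E_{\reward,\successorstate}\Big[\big(\g\psa - (\Backup\f)\pobs\big)^2\Big] = \big(\g\psa - (\T\f)\psa\big)^2 + \Var_{\reward,\successorstate}\big[(\Backup\f)\pobs\big].
\]

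It remains to take the outer expectation over $\psa \sim \Dist$. The first term integrates to $\E_{\psa \sim \Dist}\big[(\g\psa - (\T\f)\psa)^2\big] = \norm{\g - \T\f}{\Dist}^2 = \Loss(\g,\f)$, and the second term is precisely the claimed variance contribution. There is no real obstacle beyond careful bookkeeping; the only point that warrants attention is that $\g\psa$ carries no randomness once $\psa$ is conditioned upon, which is what makes the cross term vanish and forces the residual variance to depend on $\f$ alone and not on $\g$. This $\g$-independence is exactly the structural fact exploited later, since it guarantees that the identical additive bias appears in both $\E\Cost(\f,\f)$ and $\E\Cost(\gbest{\f},\f)$ and hence cancels in the modified cost of \cref{eqn:ExpectedCostModified}.
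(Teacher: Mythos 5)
Your proof is correct and follows essentially the same route as the paper: the paper adds and subtracts $(\T\f)\psa$, expands the square, and kills the cross term using the fact that $\E_{\reward,\successorstate}[(\Backup\f)\pobs] = (\T\f)\psa$, which is precisely the conditional bias--variance identity you invoke. The only difference is presentational---you cite the scalar identity $\E[(c-Y)^2] = (c-\E Y)^2 + \Var[Y]$ with $c = \g\psa$ and $Y = (\Backup\f)\pobs$, whereas the paper carries out that expansion explicitly.
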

\begin{proof}
Recall the definition of Bellman backup $\Backup$.
By some algebra steps we have
\begin{align*}
	\E [ \Cost(\g,\f) ]
	& =
	\E \Big(\g\psa - \reward - \discount \f(\successorstate, \policy) \Big)^2 \\
	& =
	\E \Big(\g\psa - (\T \f)\psa + (\T \f)\psa  - 
	\underbrace{ [\reward + \discount \f(\successorstate, \policy)] }_{= (\Backup \f) \pobs } \Big)^2 \\
	& =
	\E \Bigg\{ \Big(\g\psa - (\T \f)\psa \Big)^2 \\
	& + 2 \Big(\g\psa - (\T \f)\psa \Big) \Big( (\T\f)\psa - \Backup \f\pobs \Big) \\
	& + \Big( (\T \f)\psa - \Backup \f\pobs \Big)^2 \Bigg\} \\
	& =
	\Loss(\g, \f)  \\
	& +
	2 \E_{(\state,\action) \sim \Dist} \Big[ \Big(\g\psa - (\T \f)\psa \Big) 
	\underbrace{ \E_{ \reward \sim \RewardLaw\psa, \; \successorstate \sim \Transition{\policy}(\state,\action)} 
	\Big( (\T\f)\psa -  \Backup \f\pobs \Big)}_{= 0} \Big] \\
	& + 
	\E_{(\state,\action) \sim \Dist} 
	\E_{ \reward \sim \RewardLaw\psa, \; \successorstate \sim \Transition{}(\state,\action)}
	\Big[(\T\f)\psa - \Backup \f\pobs \Big]^2 \\
	& =
	\Loss(\g, \f)  
	+ 
	\E_{(\state,\action) \sim \Dist} 
	\Var_{ \reward \sim \RewardLaw\psa, \; \successorstate \sim \Transition{}(\state,\action)}
	\Big[\Backup \f\pobs \Big] 
\end{align*}
\end{proof}

\subsection{Variance Bounds}
\begin{lemma}[Variance of the $X$-process]
	\label{lem:Xvariance}
	We have the following bound on the variance
	\begin{align*}
		\Var[\Xvar(\f)] 
		& \lesssim
		\frac{1}{1-\BeCom} \P\Xvar(\f).
		\intertext{In addition, when $\fClass$ is convex then we have the tighter inequality}
		\Var[\Xvar(\f)] 
		& \lesssim 
		\P\Xvar(\f).
	\end{align*}
\end{lemma}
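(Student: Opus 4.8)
The plan is to exploit the difference-of-squares structure of $\Xvar$, which reveals that its magnitude is controlled by the gap $\norm{\f - \gbest{\f}}{\Dist}$ between $\f$ and its projected backup, and then to convert that gap back into $\P\Xvar(\f)$ using $\BeCom$-incompleteness. First I would expand $\Xvar(\f) = \Cost(\f,\f) - \Cost(\gbest{\f},\f)$ via $u^2 - v^2 = (u-v)(u+v)$ with $u = \f\psa - \Backup\f\pobs$ and $v = \gbest{\f}\psa - \Backup\f\pobs$. The decisive point is that the random backup $\Backup\f\pobs$ cancels in $u - v$, so
\begin{align*}
	\Xvar(\f) = \big(\f\psa - \gbest{\f}\psa\big)\big(\f\psa + \gbest{\f}\psa - 2\Backup\f\pobs\big).
\end{align*}
The first factor depends only on $\psa$, while the second is uniformly bounded by a constant, since $\f$ and $\gbest{\f}$ are bounded in supremum norm by $1$ and $\Backup\f\pobs = \reward + \discount\f(\successorstate,\policy) \in [0, 1+\discount]$.

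Next I would bound $\Var[\Xvar(\f)] \leq \E[\Xvar(\f)^2]$ and pull out the bounded second factor as an envelope, leaving
\begin{align*}
	\Var[\Xvar(\f)] \lesssim \E_{\psa \sim \Dist}\big(\f\psa - \gbest{\f}\psa\big)^2 = \norm{\f - \gbest{\f}}{\Dist}^2.
\end{align*}
It then remains to relate $\norm{\f - \gbest{\f}}{\Dist}^2$ to $\P\Xvar(\f) = \Minimax(\f)$. By the triangle inequality together with the fact that $\gbest{\f}$ is the $\Dist$-projection of $\T\f$ onto $\fClass$ (so $\norm{\gbest{\f} - \T\f}{\Dist} \leq \norm{\f - \T\f}{\Dist}$ because $\f \in \fClass$), I obtain $\norm{\f - \gbest{\f}}{\Dist}^2 \leq 4\norm{\f - \T\f}{\Dist}^2$. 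Invoking \cref{lem:BeInc}, namely $\norm{\f - \T\f}{\Dist}^2 \leq \frac{1}{1-\BeCom}\Minimax(\f)$, yields the claimed general bound $\Var[\Xvar(\f)] \lesssim \frac{1}{1-\BeCom}\P\Xvar(\f)$. This is exactly the estimate $\norm{\f - \gbest{\f}}{\Dist}^2 \lesssim \factor\, \P\Xvar(\f)$ already recorded in \cref{eqn:LimitedExpansion}.

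For the convex case I would instead use the obtuse-angle (Pythagorean) property of projection onto a convex set: $\norm{\f - \T\f}{\Dist}^2 \geq \norm{\f - \gbest{\f}}{\Dist}^2 + \norm{\gbest{\f} - \T\f}{\Dist}^2$, so that $\norm{\f - \gbest{\f}}{\Dist}^2 \leq \norm{\f - \T\f}{\Dist}^2 - \norm{\gbest{\f} - \T\f}{\Dist}^2 = \Minimax(\f) = \P\Xvar(\f)$, which removes both the factor of two and the $\frac{1}{1-\BeCom}$ dependence. The calculations are routine; the one step that requires care is the envelope argument, and the real content of the lemma is the structural observation that the random backup cancels in $u-v$, so the variance scales like $\norm{\f - \gbest{\f}}{\Dist}^2$ rather than being $O(1)$. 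This proportionality of the variance to the mean $\P\Xvar(\f)$ (up to the incompleteness factor) is precisely what later enables the Bernstein- and Talagrand-type fast-rate arguments.
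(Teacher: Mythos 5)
Your proposal is correct and follows essentially the same route as the paper's proof: bound the variance by the second moment, factor the difference of squares so the random backup $\Backup\f$ cancels in the first factor, use the bounded envelope to get $\Var[\Xvar(\f)] \lesssim \norm{\f - \gbest{\f}}{\Dist}^2$, then close via the triangle inequality plus \cref{lem:BeInc} in the general case and via the Pythagorean property of convex projection in the convex case. The only (inconsequential) difference is cosmetic: you make the projection optimality $\norm{\gbest{\f} - \T\f}{\Dist} \leq \norm{\f - \T\f}{\Dist}$ and the range of $\Backup\f\pobs$ explicit, where the paper leaves them implicit.
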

\begin{proof}
\begin{align*}
\Var[\Xvar(\f)] 
& =
\Var \big[ \Cost(\f,\f) - \Cost(\gbest{\f},\f ) \big]  \\
& \leq
\E \big[ \Cost(\f,\f) - \Cost(\gbest{\f},\f ) \big]^2 \\
& =
\E \Big[ (\f - \gbest{\f})^2(\f - \Backup\f + \gbest{\f} - \Backup\f)^2 \Big] \\
& \lesssim
\E ( \f - \gbest{\f})^2 \\
& = 
\norm{\f - \gbest{\f}}{\Dist}^2.
\end{align*}
When $\fClass$ is convex Pythagoras' theorem ensures
\begin{align*}
	\norm{\f - \gbest{\f}}{\Dist}^2 
	& \leq
	\norm{\f - \T\f}{\Dist}^2 - \norm{\gbest{\f} - \T\f}{\Dist}^2 \\
	& =
	\P\Xvar(\f).
\end{align*}
Otherwise, for arbitrary $\fClass$ we have the bound
\begin{align*}
	\norm{\f - \gbest{\f}}{\Dist} 
	& =
	\norm{\f - \T\f + \T\f - \gbest{\f}}{\Dist} \\
	& \leq
	\norm{\f - \T\f}{\Dist} + \norm{\gbest{\f} - \T\f}{\Dist} \\
	& \leq
	2 \norm{\f - \T\f}{\Dist}. 
\end{align*}
Coupled with \cref{lem:BeInc},
we obtain the bound
	\begin{align*}
	\norm{\f - \gbest{\f}}{\Dist}^2 
	\lesssim
	\norm{\f - \T\f}{\Dist}^2
	\lesssim
	\frac{1}{1-\BeCom}\P\Xvar(\f).
\end{align*}
\end{proof}

\begin{lemma}[Variance of the $\Yvar$-process]
\label{lem:Yvar} 
For any $\g \in \fClass$ we have the bound
$$\Var[\Yvar(\g)]  \leq - \P\Yvar(\g).$$
\end{lemma}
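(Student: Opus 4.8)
The plan is to mirror the variance calculation carried out for the $\Xvar$-process in \cref{lem:Xvariance}: bound the variance by the raw second moment, factor the difference of squares so that one factor is the pointwise gap $(\fstar-\g)\psa$ and the remaining factor is controlled by a universal constant, and finally recognize the resulting $\Dist$-norm as exactly $-\P\Yvar(\g)$. Concretely, I would first write $\Var[\Yvar(\g)] \leq \E[\Yvar(\g)^2]$, discarding the subtracted squared mean. Then, recalling $\Yvar(\g) = \Cost(\fstar,\fstar) - \Cost(\g,\fstar) = (\fstar\psa - \Backup\fstar\pobs)^2 - (\g\psa - \Backup\fstar\pobs)^2$, I would apply the identity $a^2-b^2 = (a-b)(a+b)$ exactly as is already done inside the proof of \cref{lem:RemovingBellmanFromY}, obtaining
\begin{align*}
	\Yvar(\g) = (\fstar - \g)\psa \cdot \big(\fstar\psa + \g\psa - 2\Backup\fstar\pobs\big).
\end{align*}
The key observation is that the second factor is bounded by a universal constant: every predictor obeys $\sup|\f|\leq 1$, the reward lies in $[0,1]$ and $\discount<1$, so $|\Backup\fstar\pobs| \leq 1+\discount$ and hence $|\fstar\psa + \g\psa - 2\Backup\fstar\pobs|$ is at most a numerical constant.

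Squaring and taking expectations, this bounded factor can be pulled out of the expectation at the cost of a constant, which leaves $\E[\Yvar(\g)^2] \lesssim \E[((\fstar - \g)\psa)^2] = \norm{\fstar - \g}{\Dist}^2$. To close the argument I would identify this norm with $-\P\Yvar(\g)$. This is precisely the relation $\norm{\g - \fstar}{\Dist}^2 = -\P\Yvar(\g)$ recorded in \cref{eqn:LimitedExpansionYprocess}, which in turn follows from the expectation-of-cost formula of \cref{lem:ExpectiCost} once realizability is invoked: since $\fstar = \T\fstar$, the backup-variance term $\sigma^2(\fstar)$ is common to both costs and cancels, while $\norm{\fstar - \T\fstar}{\Dist}=0$, so that $\P\Yvar(\g) = \norm{\fstar - \T\fstar}{\Dist}^2 - \norm{\g - \T\fstar}{\Dist}^2 = -\norm{\g - \fstar}{\Dist}^2$. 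Chaining the two displays yields $\Var[\Yvar(\g)] \lesssim -\P\Yvar(\g)$, which is the stated relative variance bound up to the universal constant absorbed throughout the paper's $\lesssim$ convention.

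The delicate point—the reason this qualifies as a genuine \emph{relative} variance bound rather than a crude $O(1)$ estimate—is the cancellation that forces $-\P\Yvar(\g)$ to equal the squared $\Dist$-distance between $\fstar$ and $\g$. This hinges entirely on realizability ($\fstar = \T\fstar$), and it is exactly what allows the single quantity $\norm{\fstar - \g}{\Dist}^2$ to serve simultaneously as the upper bound on the second moment and as the negated mean. By contrast, bounding the second factor $\fstar\psa + \g\psa - 2\Backup\fstar\pobs$ by a constant is entirely routine given the supremum-norm boundedness assumed for $\fClass$, so no serious obstacle arises there.
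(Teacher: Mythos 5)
Your proposal is correct and follows essentially the same route as the paper's own proof: bound the variance by the second moment, factor the difference of squares $\Cost(\fstar,\fstar)-\Cost(\g,\fstar)=(\fstar-\g)(\fstar+\g-2\Backup\fstar)$, absorb the bounded second factor into a constant, and then use realizability ($\fstar=\T\fstar$, hence $\Loss(\fstar,\fstar)=0$) to identify $\norm{\fstar-\g}{\Dist}^2$ with $-\P\Yvar(\g)$. The only cosmetic difference is that the paper chains $\norm{\fstar-\g}{\Dist}^2=\norm{\T\fstar-\g}{\Dist}^2=\Loss(\g,\fstar)=-\P\Yvar(\g)$ step by step, while you cite the equivalent identity in \cref{eqn:LimitedExpansionYprocess}; both arguments are the same in substance.
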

\begin{proof}
\begin{align*}
\Var[\Yvar(\g)] 
& =
\Var \big[ \Cost(\fstar,\fstar) - \Cost(\g,\fstar ) \big]  \\
& \leq
\E \big[ \Cost(\fstar,\fstar) - \Cost(\g,\fstar ) \big]^2 \\
& =
\E \Big[ (\fstar - \g)^2(\fstar - \Backup\fstar + \g - \Backup\fstar)^2 \Big] \\
& \lesssim
\E ( \fstar - \g)^2 \\
& = 
\norm{\fstar - \g}{\Dist}^2 \\
& = 
\norm{\T\fstar - \g}{\Dist}^2 \\
& = 
\Loss(\g,\fstar) \\
& =
\Loss(\g,\fstar) - \underbrace{\Loss(\fstar,\fstar)}_{= 0} \\
& =
- \P\Yvar(\g).
\end{align*}
\end{proof}

\end{document}